\pgfplotsset{compat=newest}
\newcommand{\un}{\mathds{1}}
\def\E{\mathbb E}
\def\EXP{{\E}}
\def\expec{{\EXP}}
\def\ol{\overline}
\newcommand{\defeq}{\stackrel{\rm def}{=}}
\def\argmax{\mathop{\rm arg\, max}}
\def\argmin{\mathop{\rm arg\, min}}
\def\blackslug{\hbox{\hskip 1pt \vrule width 4pt height 8pt depth 1.5pt
\hskip 1pt}}
\def\qed{\quad\blackslug\lower 8.5pt\null\par}
\def\C{{\cal C}}
\newcommand{\RR}{\mathbb{R}}
\newcommand{\rset}{\RR}
\newcommand{\Prob}[1]{\mathbb{P}\left( #1 \right) }
\newcommand{\PP}{\Prob}
\newcommand{\given}[1][{}]{\;\middle\vert\;{#1} }
\newtheorem{theorem}{{\bf Theorem}}
\newtheorem{proposition}{{\bf Proposition}}
\newtheorem{definition}{{\bf Definition}}
\newtheorem{lemma}{{\bf Lemma}}
\newtheorem{remark}{{\bf Remark}}
\newlength\Radius
\icmltitlerunning{Feature Clustering for Support Identification in Extreme Regions}
\begin{document}

\twocolumn[
\icmltitle{Feature Clustering for Support Identification in Extreme Regions}




\icmlsetsymbol{equal}{*}

\begin{icmlauthorlist}
\icmlauthor{Hamid Jalalzai}{tp}
\icmlauthor{Rémi Leluc}{tp}
\end{icmlauthorlist}

\icmlaffiliation{tp}{Télécom Paris, Institut Polytechnique de Paris, France}
\icmlcorrespondingauthor{Hamid Jalalzai}{hamid.jalalzai@gmail.com}
\icmlcorrespondingauthor{Rémi Leluc}{remi.leluc@gmail.com}

\icmlkeywords{}
\vskip 0.3in
]



\printAffiliationsAndNotice{}  


\begin{abstract}
Understanding the complex structure of multivariate extremes is a major challenge in various fields from portfolio monitoring and environmental risk management to insurance. In the framework of multivariate Extreme Value Theory, a common characterization of extremes' dependence structure is the angular measure. It is a suitable measure to work in extreme regions as it provides meaningful insights concerning the subregions where extremes tend to concentrate their mass. The present paper develops a novel optimization-based approach to assess the dependence structure of extremes. This support identification scheme rewrites as estimating \emph{clusters of features} which best capture the support  of extremes. The dimension reduction technique we provide is applied to statistical learning tasks such as feature clustering and anomaly detection. Numerical experiments provide strong empirical evidence of the relevance of our approach.
\end{abstract}

\section{Introduction}


In a wide variety of applications ranging from structural engineering to finance, \textit{extreme} events can occur with a far from negligible probability \cite{embrechts1999extreme, embrechts2013modelling}. In the multivariate setting, such events are usually modeled through threshold exceedance. A random vector $X = (X^1, \ldots, X^p) \in \rset^p, (p > 1)$ is said to be extreme if $\|X\| > t$ for any given norm $\|\cdot\|$ and some \emph{large} threshold $t > 0$. The latter is generally chosen so that a small but non negligible proportion of data falls in the extreme regions $\{x \in \rset^p, \|x\| > t\}$.
In machine learning tasks, it is relevant to apply different treatments to \textit{extreme} and \textit{normal} data. Devoting attention to extreme regions can lead to better understanding of the distributional law of $X$ and practical performance of classical algorithms, as shown by several recent studies: in anomaly detection \citep{Roberts99,Clifton2011, goix2016sparse,thomas2017anomaly}, classification \citep{vignotto2018extreme,jalalzai2018binary,jalalzai2020heavy} or feature clustering \citep{chautru2015dimension, chiapino2019identifying,janssen2020k} when dedicated to the most extreme regions of the sample space.

Scaling up multivariate Extreme Value Theory (EVT) is a key issue 
when addressing high-dimensional learning tasks. Indeed, most multivariate extreme value models have been designed to handle moderate dimensional problems, \textit{e.g.}, where dimension $p \leq 10$. For larger dimensions, simplifying modeling choices are required, stipulating for instance that only some predefined subgroups of components may be concomitant extremes, or, on the contrary, that all must be \citep{stephenson2009high,sabourinNaveau2012}.
This calls for dimensionality reduction devices adapted to multivariate extreme values.

Identifying the features $X^j$'s (and the resulting subspaces) contributing to $X$ being extreme is a major challenge in EVT. The distributional structure of extremes highlights the components of a multivariate random variable that may be simultaneously large while the others remain small. This is a valuable piece of information for multi-factor risk assessment or detection of anomalies among other –not abnormal– extreme data. Two phenomena are likely to happen: \textit{(i)} only a small number of features may be concomitantly large, so that only a small number of subspaces have non-zero mass, \textit{(ii)} each of these groups -\textit{clusters of features}- contains a limited number of coordinates (compared to the original dimensionality), so that the corresponding subspace with non zero mass have small dimension compared to $p$.
The purpose of this paper is to introduce a data-driven methodology for identifying such subspaces, to reduce the dimensionality of the problem and thus to learn a sparse representation of extreme behaviors.

\begin{figure}
    \centering
    \includegraphics[trim={0 2.5cm 0 0.2cm},clip, width=0.5\textwidth]{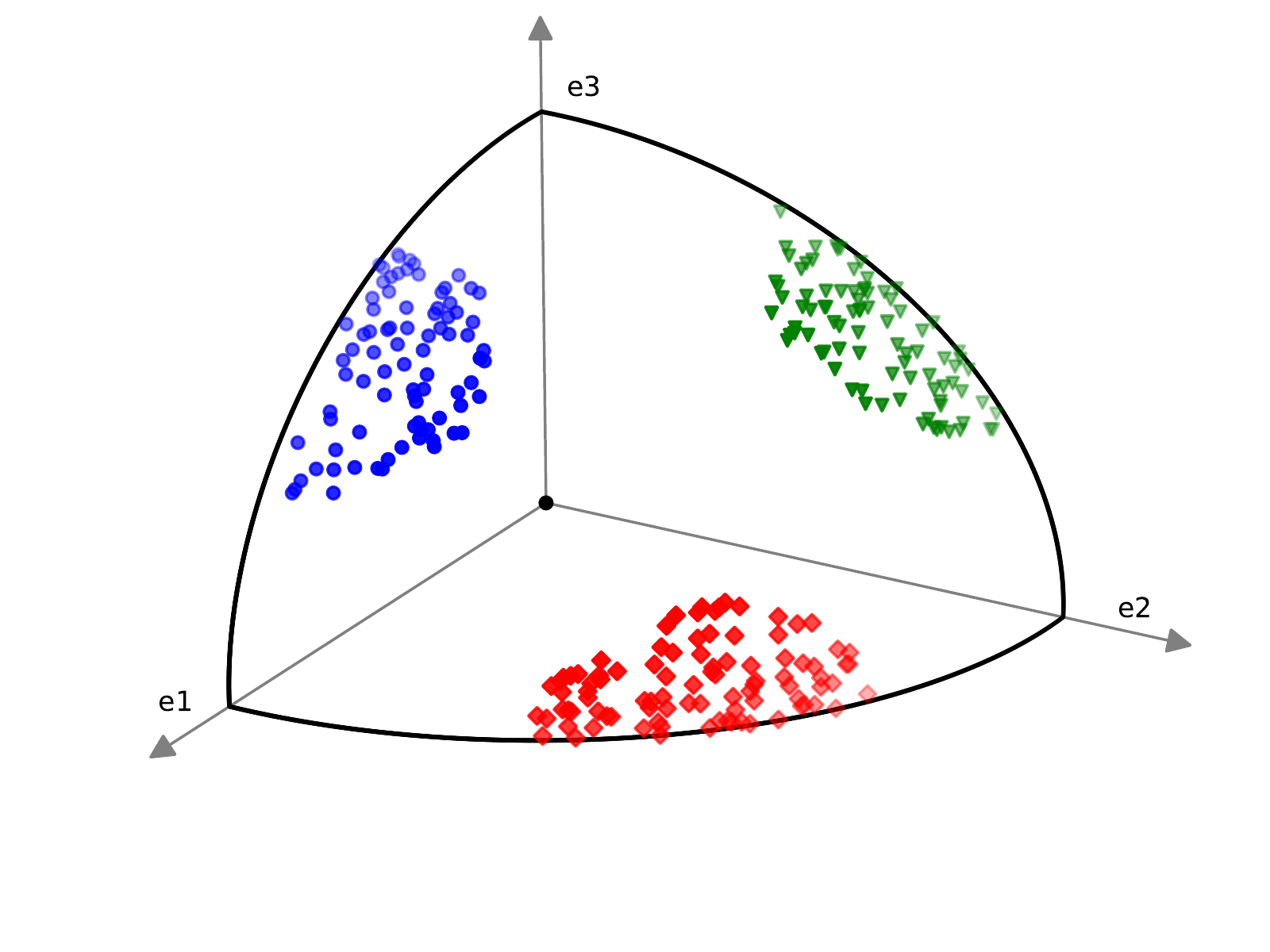}
    \caption{Illustration of  normalized extremes $\theta_i$'s on the $\ell_2$-sphere of $\rset_+^3$ with clusters of features $K_1=\{1,3\}$(blue), $K_2=\{1,2\}$(red) and $K_3=\{2,3\}$(green).} 
    \label{fig:sphere}
\end{figure}

 This paper provides a novel optimization-based approach for finding subspaces from multivariate extreme features. Given $n \geq 1$ \textit{i.i.d} copies  $X_1,\ldots,X_n$ of a heavy-tailed random variable $X=(X^1,\ldots,X^p) \in \rset^p$, the goal is to identify clusters of features $K \subset \llbracket 1,p \rrbracket$ such that the variables $\{X^j: j \in K\}$ may be large while the other variables $X^j$ for $j \notin K$ simultaneously remain small. Figure \ref{fig:sphere} depicts such an example of normalized extremes with the associated feature clusters.
\newpage
 Up to approximately $2^p$ combinations of extreme features are possible and contributions such as \citet{chautru2015dimension, chiapino2016feature,  goix2016sparse,engelke2018graphical,chiapino2019identifying} tend to identify a smaller number of simultaneous extreme features. Dimensional reduction methods such as principal components analysis and derivatives \citep{wold1987principal, cutler1994archetypal,tipping1999probabilistic,cooley2019decompositions, drees2019principal} can be designed to find a lower dimensional subspace where extremes tend to concentrate. Following this path, the idea of the present paper is to decompose the $\ell_1$-norm of a positive input sample as a weighted sum of its features. 

Several EVT contributions are aimed at assessing a sparse support of multivariate extremes \citep{de2007extreme,chiapino2016feature,meyer2019sparse,engelke2020sparse}. A broader scope of contributions related to the work detailed in this paper ranges from compressed sensing \citep{candes2006robust, candes2006stable,tsaig2006extensions} and matrix factorization \citep{lee2001algorithms,7254164} to group sparsity \citep{yuan2006model,simon2013sparse,devijver2015finite}.

\textbf{Contributions.}
The main results of this paper are: \\
\textit{(i)} We present a novel optimization-based approach to perform subspace clustering of extreme regions in the multivariate framework. This is achieved by the algorithm \textit{Multivariate EXtreme Informative Clustering by Optimization} (in  short MEXICO) which finds a sparse representation for the dependence structure of extremes. \\
\textit{(ii)} Following contribution laid out by \citet{niculae2018sparsemap}, we study at length different manifolds on the probability simplex, including our $\mathbb{M}$-\textit{set}. Our analysis may be of independent relevance. \\
\textit{(iii)} The performance of the introduced algorithm are demonstrated from both theoretical and empirical points of view. First we provide a non-asymptotic bound on the excess risk. Secondly, numerical experiments on both \emph{feature clustering} and \emph{anomaly detection} tasks in extreme regions  demonstrate the relevance of our method when compared to existing methods.



\textbf{Notations.} The following notations are used throughout the paper: $\mathcal{M}_{n}^p([1,+\infty[)$ is the set of $n \times p$ matrices valued in $[1,+\infty[$. Any matrix is denoted in bold. $\mathcal{A}_p^m$ denotes the set of \textit{mixture matrices} composed of $p \times m$ matrices valued in $[0,1]$ where the sum of elements of any column equals $1$. For any $\mathbf{M}=(\mathbf{M}_i^j) \in \mathcal{M}_{n}^p(\rset)$, for $i \in \llbracket1,n\rrbracket$ (\textit{resp.} $j \in \llbracket1,p\rrbracket$), let $e_i$ (\textit{resp.} $e^j$) denote the vector of the canonical basis such that $e_i \mathbf{M}  = \mathbf{M}_i$ (\textit{resp.} $\mathbf{M} e^j = \mathbf{M}^j$) where $M_i$ corresponds to the $i$-th line of $\mathbf{M}$ (\textit{resp.} $M^j$ corresponds to the $j$-th column). Denote $\mathfrak{S}_m$ the finite symmetric group of order $m$. Let $E = [0, \infty]^p \backslash\{0\}$ and $\Omega_{p, ||\cdot||} = \{x \in \rset^p_+ : \|x\| \leq 1\}$ the ball associated to the norm $||\cdot||$ and its complementary set $\Omega_{p, ||\cdot||}^c = \rset^p_+ \backslash \Omega_p$, let $S$ denote the sphere associated to $\|\cdot\|$ and for $x \in \rset^p$ and $K \subset \llbracket 1,p \rrbracket$, write $x^{(K)} = (x^j \un_{j \in K})$. Denote by $\Gamma$ the Euler function.

\textbf{Outline.} The paper is organized as follows, in Section \ref{sec:EVT} we introduce the multivariate EVT background and our problem of interest. In Section \ref{sec:optim} we present our optimization-based approach along with its specific details concerning the projection step onto the probability simplex. Section \ref{sec:th_results} gathers the theoretical results. We perform some numerical experiments in Section \ref{sec:exp} to highlight the performance of our method and we finally conclude in Section \ref{sec:conclusion}. Proofs, technical details and additional results can be found in the appendix.

\section{Preliminaries } \label{sec:EVT} 

Extreme value theory develops models for learning the unusual
rather than the usual, in order to provide a reasonable assessment of the
probability of occurrence of \emph{rare events}. This section first recalls the required mathematical framework and classical tools for the analysis of multivariate extremes and then introduce our problem of interest.
\subsection{Mathematical background}

The notion of \textit{regular variation} is a natural way for modelling power law behaviors that appear in various fields of probability theory. In this paper, we shall focus on the dependence and regular variation of random variables and random vectors. We refer to the book of \citet{Resnick1987} for an excellent account of  heavy-tailed distributions and the theory of regularly varying functions.

\begin{definition}{(Regular variation \cite{karamata1933mode})}
A positive measurable function $g$ is regularly varying with index $\alpha \in \rset$, notation $g \in \mathcal{R}_{\alpha}$ if $\lim_{x \to +\infty} g(tx)/g(x) = t^{\alpha}$ for all $t>0$.
\end{definition}
The notion of regular variation is defined for a random variable $X$ when the function of interest is the distribution tail of $X$.

\begin{definition}(Univariate regular variation)
A non-negative random variable X is regularly varying with tail index $\alpha \geq 0$ if its right distribution tail $x \mapsto \PP{X>x}$ is regularly varying with index $-\alpha$, i.e., $\lim_{x \to +\infty} \PP{X>tx \given X>x } = t^{-\alpha}$ for all $t>1$.
\end{definition}

This power-law behavior may be thought of as a smoothness condition for the tail at infinity. This definition can be extended to the multivariate setting where the topology of the probability space is involved. We rely on the vague convergence of measures \citep[Section 3.4]{Resnick1987} and consider the following definition \citep[p.69]{resnick1986point}.

\begin{definition}(Multivariate regular variation)
\label{def:multivariateRegularVariation}
A random vector $X \in \rset_{+}^p$ is regularly varying with tail index $\alpha \geq 0$ if there exists $g \in \mathcal{R}_{-\alpha}$ and a nonzero Radon measure $\mu$ on $E$ such that
\begin{align*}
g(t)^{-1} \PP{t^{-1} X \in A}  \xrightarrow[t \to \infty]{} \mu(A),
\end{align*}
where $A \subset E$ is any Borel set such that $0 \not \in \partial A$ and $\mu(\partial A) = 0$.
\end{definition}
The limiting measure $\mu$, known as the \emph{exponent measure}, is homogeneous of order $-\alpha$ \textit{i.e.} for any $t > 0$, $\mu(t\cdot) = t^{-\alpha}\mu(\cdot) $. This suggests a polar decomposition of $\mu$ into a radial component and an angular component $\Phi$. For any $x = (x_1, \cdots, x_p) \in \rset^p_+$, one can set
\begin{align*}
\left\{
    \begin{array}{ll}
        R(x)&=||x|| \\
        \Theta(x)&=\left(\frac{x_1}{R(x)}, \ldots, \frac{x_p}{R(x)}\right) \in S
    \end{array}
\right.
\end{align*}
For any $B \subset S$, the \textit{angular measure} $\Phi$ on $S$ is defined as,
\begin{align*}
    \Phi(B) \defeq \mu(\{x, R(x) \geq 1, \Theta(x) \in B \}).
\end{align*}
The angular measure $\Phi$ plays a central role in the analysis of extremes, as it characterizes the directions where extremes are more likely to occur. Assessing the support of $\Phi$, or equivalently of $\mu$, leads to forecasting the directions where extremes are more likely to occur \textit{i.e.} features that are more likely to jointly be large.

\subsection{Probabilistic Framework \& Problem Statement}

We observe $n \geq 1$ \textit{i.i.d} copies  $X_1,\ldots,X_n$ of a regularly varying random vector $X=(X^1,\ldots,X^p) \in \rset^p$ with tail index $\alpha=1$. Extremes correspond to samples with norm larger than a fixed threshold $t > 0$. Incidentally, $t$ should depend on $n$, as the notion of \textit{extreme} should be understood as \textit{large} norms compared to the vast majority of observed data. 
The Euclidian space $\rset^p$ being of finite dimension, all norms are equivalent and the choice of the norm does not matter for the definition of the limit measure \citep{beirlant2006statistics}, therefore we may use the $\ell_\infty$-norm in the remainder of this paper to analyse extremes. In other words $R(x)$  is set as $||.||_\infty$ in Definition \ref{def:multivariateRegularVariation}.
The observations are first sorted by decreasing order of magnitude $\|X_{(1)}\|_\infty \geq \ldots \geq \|X_{(n)}\|_\infty$. Then, consider a small fraction $0 < \gamma < 1$ of the observations and denote by $t_\gamma$ the quantile of $||X||_\infty$ at level $(1-\gamma)$, \textit{i.e.} $\PP{||X||_\infty > t_\gamma} = \gamma$. The extreme samples are $X_{(1)},\ldots,X_{(k)}$ where $k = \lfloor n\gamma\rfloor$ is a selection threshold (cf Remark \ref{rem:sel_k}).

\begin{remark}(Pareto Standardization)
  \label{rmk:ParetoStandardization}
  In this work, it is assumed that all marginal distributions are tail equivalent to the Pareto distribution with index $\alpha = 1$. In practice, the tails of the marginals may be different and it is convenient to work with marginally standardized variables. Thus, the margins $F^j (x^j) = \PP{X^j \leq x^j }$ are separated from the dependence structure in the description of the joint distribution of $X$. Consider the Pareto standardized variables $V^j = 1/(1 - F^j (X^j)) \in [1, \infty]$ and
$V = T(X) = (V^1 , \ldots , V^d )$. Replacing X by V permits to take $\alpha = 1$ and $g(t) = 1/t$ in Definition \ref{def:multivariateRegularVariation}. Appendix \ref{appendix:Pareto_scaling} provides further details concerning $\widehat T$ the empirical counterpart of $T$.
\end{remark}

\begin{remark}(On selection of $k$) \label{rem:sel_k}
  Determining $k$ is a central bias variance trade-off of Extreme Value analysis (See \textit{e.g.} \citet{goix2016sparse} and references therein). As $k$ gets too large, a bias is induced by taking into account observations which do not necessarily behave as extremes: their distribution deviates significantly from the limit distribution of extremes. On the other hand, too small values lead to an increase of the algorithm's variance. 
\end{remark}


Our work focuses on assessing the dependence structure in extreme regions in a multivariate setup. The angular measure $\Phi$ fully describes the latter \emph{asymptotic} dependence. Therefore, we seek to accurately infer a \emph{sparse} summary of the mass of extremes spread on each constructed subspace. \\
Let $X \in \rset_{+}^p$ be a multivariate random vector whose dependence structure is unknown. 
We address the problem of finding different feature clusters $K_j \subset \llbracket 1,p \rrbracket$ with $j=1,\ldots,m$ and $m < p$ such that all features 
in a same subset may be large together. 
In order to reach a representation of interest, \textit{e.g.}, diversity for portfolio in finance or clusters for smart grids in wireless technologies, we seek disjoint clusters ($K_i \cap K_j = \emptyset$ for all $i \neq j$).
Relying on the $m$ clusters of features $K_1, \ldots, K_m$ and the underlying subspaces, the exponent measure $\mu$ can be approximated as $\mu(\cdot) \approx \sum_{j=1}^m \mu_{K_j} (\cdot)$. Each component $\mu_{K_j}$ is concentrated on the subregion given by the features of cluster $K_j$. 


 In the remaining of this paper, $\mathbf{X} \in \mathcal{M}_{k, p}([1,+\infty[)$ corresponds to the truncated training set: $X_{(1)},\ldots, X_{(k)}$. We search a subset $K$ of features such that the $\ell_1$-norms of $\mathbf{X}_i$ and its restriction $\widetilde{\mathbf{X}}_{i} = \mathbf{X}_{i}^{(K)}$ are almost equal \textit{i.e.}
 $$\|\widetilde{\mathbf{X}}_i \|_1 \approx \|\mathbf{X}_i\|_1.$$

\section{Feature Mixture in Extreme Regions} \label{sec:optim}

This section presents an approach to find relevant directions of the extreme samples to estimate the support of $\mu$. The analysis is carried out under the empirical risk minimization paradigm and details our algorithm MEXICO.

\subsection{Empirical Risk Minimization}

To assess the dependence structure of features of extreme samples, we consider the framework of empirical risk minimization focused on extreme regions. Consider an extreme sample $X$, \textit{i.e.}, an observation satisfying $\|X\|_\infty>t_\gamma$. 
The goal is to learn a representation function $g: \rset^p \to \rset_{+}$ in order to minimize the Bayes risk at level $t_\gamma$ defined by
\begin{equation} \label{eq:def_true_risk}
  \mathcal{R}_{t_\gamma}(g) = \expec_{X}\left[\ell\left(X,g\left(X\right)\right) \Big| \|X\|_\infty>t_\gamma\right],
\end{equation}
where $\ell:\rset_{+}^p \times \rset_{+} \to \rset_{+}$ is a loss function measuring the discrepancy between the true extreme dependence structure of $X$ and its predicted counterpart $g\left(X\right)$. Based on the extreme observations $X_{(1)},\ldots,X_{(k)}$, 
 the empirical risk in the extreme regions is given by
\begin{equation} \label{eq:def_emp_risk}
  \widehat{\mathcal{R}}_k(g) = \frac{1}{k} \sum_{i=1}^k \ell\big(X_{(i)},g(X_{(i)})\big).
\end{equation}

\textbf{Features mixtures.} In order to recover the clusters of features, we consider mixtures of the components of each sample. The true number of subregions is unknown and we search for $m$ clusters where $m$ is selected according to Remark \ref{rem:sel_m}. We consider the probability simplex $\Delta_p$ defined on the positive orthant of $\rset_+^p$ by
\begin{equation*}
\Delta_p = \{x \in \rset^p_+, x_1 + \ldots + x_p = 1\},
\end{equation*}
and let $\mathbf{W} \in \mathcal{A}_p^m$ with $m < p$ be a \textit{mixture matrix}. We denote by $\widetilde{\mathbf{X}} = \mathbf{X}\mathbf{W} \in \mathcal{M}_{k}^m(\rset_+)$ the transformed matrix. The following proposition ensures the preservation of the regular variation of the resulting vectors $\widetilde{X}_i$'s.

\begin{proposition}[Mixture transformation] \label{th0} Let $X \in \rset_{+}^p$ be a regularly varying vector as defined earlier and $\mathbf{W} \in \mathcal{A}_p^m$ a mixture matrix with $1 < m \leq p$. Then 
  the transformed vector $\widetilde{X} = X \mathbf{W} \in \rset_{+}^m$ is regularly varying with tail index $\alpha = 1$. Thereby, if we denote by $\mu$ (resp. $\widetilde \mu$) the limiting measure of $X$ (resp. $\widetilde{X}$), we have $$(1/m)\widetilde \mu (\Omega_{m, ||\cdot||_1}^c) \leq \widetilde \mu (\Omega_{m, ||\cdot||_\infty}^c) \leq \mu (\Omega_{p, ||.||_1}^c).$$
\end{proposition}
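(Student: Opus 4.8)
The statement has two parts: first, that $\widetilde X = X\mathbf W$ is regularly varying with index $\alpha=1$; second, the chain of inequalities relating $\widetilde\mu$ and $\mu$ on the complements of the unit balls. I will treat them in order.

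For the regular-variation claim, I would argue that the linear map $x \mapsto x\mathbf W$ from $\rset_+^p$ to $\rset_+^m$ is continuous and homogeneous of degree $1$, and that it is proper in the relevant sense (it does not send nonzero points to $0$, since each column of $\mathbf W$ sums to $1$, so no row of the output can vanish unless the input does — actually I should be careful here: a component $\widetilde X^\ell = \sum_j X^j W_j^\ell$ vanishes only if $X^j=0$ for all $j$ with $W_j^\ell>0$; since columns sum to $1$ each column has some positive entry, but a single input coordinate being large already forces some output coordinate to be large because every row... no — rather, if $\|x\|_\infty$ is large then since $\sum_\ell W_j^\ell \le 1$ is false, actually $\mathbf W$ is a mixture matrix so its columns sum to one, meaning $\sum_j W_j^\ell = 1$; the rows need not sum to anything in particular, so a coordinate $x^{j_0}$ could in principle have $W_{j_0}^\ell=0$ for all $\ell$, but then that coordinate contributes nothing — this is fine, it just means $\widetilde\mu$ may lose mass, which is exactly what the inequalities quantify). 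The clean way: invoke the standard result (e.g. from Resnick or the mapping theorem for regular variation / vague convergence) that if $X$ is multivariate regularly varying with exponent measure $\mu$ and $T$ is a continuous homogeneous-degree-$1$ map with $T^{-1}(\{0\})=\{0\}$, then $T(X)$ is regularly varying with exponent measure $\mu\circ T^{-1}$, and the tail index is preserved. I would state this as a lemma (or cite it), verify continuity and homogeneity of $x\mapsto x\mathbf W$, and conclude $\widetilde\mu = \mu\circ(\cdot\,\mathbf W)^{-1}$ with index $\alpha=1$.

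For the inequalities, the key geometric fact is norm monotonicity: for $y\in\rset^m$, $\|y\|_\infty \le \|y\|_1 \le m\|y\|_\infty$, hence $\{\|y\|_1 > 1\} \subset \{\|y\|_\infty > 1\}$... wait, I want the complements of the *unit balls*, i.e. sets of the form $\{y : \|y\| > 1\}$ (up to the precise definition of $\Omega^c$ in the excerpt, $\Omega_{m,\|\cdot\|}^c = \{y\in\rset_+^m : \|y\|>1\}$). From $\|y\|_\infty \le \|y\|_1$ we get $\{\|y\|_1>1\}\subseteq\{\|y\|_\infty>1\}$, so $\widetilde\mu(\Omega_{m,\|\cdot\|_1}^c) \le \widetilde\mu(\Omega_{m,\|\cdot\|_\infty}^c)$; and from $\|y\|_1 \le m\|y\|_\infty$, i.e. $\{\|y\|_\infty > 1\} \subseteq \{\|y\|_1 > m\} = \{\frac1m\|y\|_1 > 1\}$, combined with the homogeneity $\widetilde\mu(m\,\cdot) = \frac1m\widetilde\mu(\cdot)$ (tail index $1$), we get $\widetilde\mu(\Omega_{m,\|\cdot\|_\infty}^c) \le \widetilde\mu(\{\|y\|_1>m\}) = \widetilde\mu(m\cdot\{\|y\|_1>1\}) = \frac1m\widetilde\mu(\Omega_{m,\|\cdot\|_1}^c)$ — hmm, that gives the *left* inequality $\frac1m\widetilde\mu(\Omega_{m,\|\cdot\|_1}^c)\le\widetilde\mu(\Omega_{m,\|\cdot\|_\infty}^c)$. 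Good, that matches. So the first two-thirds of the chain is pure norm comparison plus homogeneity.

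The last inequality $\widetilde\mu(\Omega_{m,\|\cdot\|_\infty}^c) \le \mu(\Omega_{p,\|\cdot\|_1}^c)$ is where the mixture structure enters, and I expect this to be the main obstacle. Using $\widetilde\mu = \mu\circ(\cdot\,\mathbf W)^{-1}$, the left side is $\mu(\{x\in\rset_+^p : \|x\mathbf W\|_\infty > 1\})$, so it suffices to show the set inclusion $\{x : \|x\mathbf W\|_\infty > 1\} \subseteq \{x : \|x\|_1 > 1\}$, i.e. that $\|x\mathbf W\|_\infty \le \|x\|_1$ pointwise on $\rset_+^p$. For each output coordinate $\ell$, $(x\mathbf W)^\ell = \sum_{j=1}^p x^j W_j^\ell \le \sum_j x^j = \|x\|_1$ because $W_j^\ell \in [0,1]$ and $x^j\ge 0$ — actually I only need $W_j^\ell \le 1$ here, not the column-sum condition. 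Taking the max over $\ell$ gives $\|x\mathbf W\|_\infty \le \|x\|_1$, hence the inclusion and the measure inequality (being careful that $\mu$ is a positive measure so monotone under inclusion). Assembling the three pieces yields the displayed chain. The only subtleties to nail down carefully are: the precise definition of $\Omega^c$ and whether boundary sets are $\mu$-null (needed so the inequalities are genuine and not artifacts of open-vs-closed — but since $\mu$ puts no mass on spheres of norms under standard assumptions, or one works with closed sublevel sets throughout, this is routine), and verifying the homogeneity degree is exactly $-1$ so that the factor $m$ (rather than $m^\alpha$) is correct.
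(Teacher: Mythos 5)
Your treatment of the chain of inequalities is sound and essentially coincides with the paper's argument: the two left terms follow from $\|y\|_\infty \le \|y\|_1 \le m\|y\|_\infty$ together with the homogeneity $\widetilde\mu(m\,\cdot)=m^{-1}\widetilde\mu(\cdot)$, and the rightmost inequality from the pointwise bound $\|x\mathbf{W}\|_\infty \le \|x\|_1$ valid because the entries of $\mathbf{W}$ lie in $[0,1]$. The paper runs the same comparisons at the pre-limit level, bounding $t\,\PP{\|\widetilde X\|_\infty > t}$ by $t\,\PP{\|X\|_1 > t}$ and letting $t\to\infty$, rather than pushing forward the limit measure; both executions are fine.

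The first half is where your route genuinely differs and where there is a gap. The paper proves regular variation of $\widetilde X$ via the Basrak--Davis--Mikosch characterization (every linear functional of $\widetilde X$ is a linear functional of $X$), and then pins down the tail index through a separate lemma in the spirit of Jessen--Mikosch: for each column, $\langle W^j, x\rangle \ge \varepsilon\, x_{i^*}$ with $\varepsilon = \max_i W_i^j > 0$ (guaranteed by the column-sum constraint), so each margin of $\widetilde X$ inherits a nondegenerate Pareto-type tail from the standardized margins of $X$. You instead invoke a mapping theorem for a continuous, degree-$1$ homogeneous map under the hypothesis $T^{-1}(\{0\})=\{0\}$ --- but, as you yourself observe, this hypothesis can fail: a mixture matrix may have an all-zero row, in which case a nonzero $x$ is mapped to $0$. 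The condition actually needed for the vague-convergence mapping argument is properness (preimages of sets bounded away from $0$ remain bounded away from $0$), which does hold here thanks to $\|x\mathbf{W}\|_\infty \le \|x\|_1$; but even granting that, the mapping theorem only delivers $\widetilde\mu = \mu\circ T^{-1}$, and to assert that $\widetilde X$ is regularly varying with tail index $\alpha=1$ you must additionally check that this pushforward is a \emph{nonzero} measure. That nondegeneracy does not follow from properties of the map alone; it uses the standardization of the margins of $X$ and the fact that each column of $\mathbf{W}$ has a strictly positive entry, i.e. exactly the bound $\langle W^j,x\rangle \ge \varepsilon\, x_{i^*}$ from the paper's lemma, which gives $\widetilde\mu(\{y: y_j>1\}) \ge \mu(\{x: x_{i^*}>1/\varepsilon\}) = \varepsilon\,\mu(\{x: x_{i^*}>1\}) > 0$. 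Your proposal as written omits this step and leans on a hypothesis that need not hold, so you should replace the appeal to $T^{-1}(\{0\})=\{0\}$ by the properness bound and add this one-line nondegeneracy estimate.
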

The proof of the proposition is deferred to the appendix.
\begin{remark}(Selection of $m$) In view of Proposition \ref{th0}, in practice, the required dimension $m < p$ can be seen as the smallest value $m$ such that the empirical version of $\widetilde \mu (\Omega_{m, ||.||_\infty}^c)$ is arbitrarly close to the empirical version of $\mu (\Omega_{p, ||.||_1}^c)$. Hence, the $m$ selected clusters provide relevant support of extremes.
\label{rem:sel_m}
\end{remark}

\textbf{Loss function.} A natural question rises in the choice of the approximation function $g$ used in Eq. \eqref{eq:def_true_risk}. Each column $\mathbf{W}^j$ for $j \in \llbracket 1,m \rrbracket$ is modelling a mixture of components and represents a cluster $K_j$. For any sample $X$, we want to find a mixture that gives a good approximation in $\ell_1$-norm, \textit{i.e.}, we seek a column $j \in \llbracket 1,m \rrbracket$ for which $\widetilde{{X}}^{j} = (X\mathbf{W})^j$ is the closest to $\|X\|_1$. A simple choice for the approximation function $g$ is reached through a linear combination and defined as follows for any input $x \in \rset^p_+$,
\begin{align}
  \label{eq:gw}
g_{\mathbf{W}}:x \mapsto g_{\mathbf{W}}(x) = \max_{1 \leq j \leq m}(x\mathbf{W})^j.
\end{align}
The associated loss function is defined by
\begin{align}
  \label{eq:loss_angular}
\ell\left(x,g_\mathbf{W}(x)\right)= \frac{1}{p}\left( \|x\|_1 - g_{\mathbf{W}}(x)\right).
\end{align}
Observe that this particular choice yields a loss function bounded in $[0,1]$ when using the angular decomposition of extremes $\theta = X/\|X\|_{\infty}$. With this choice, the approximation function $g_{\mathbf{W}}$ is parametrized by the mixture matrix $\mathbf{W}$. For ease of notation we abusively denote by $\ell(X,\mathbf{W}) = \ell(X,g_\mathbf{W}(X)) $. Thus, using this specific loss in Eq. \eqref{eq:def_true_risk}, the goal is to learn the mixture matrix $\mathbf{W}_t^\star$  minimizing the risk on extremes
\begin{align*}
\mathbf{W}_t^\star \in \argmin_{\mathbf{W} \in \mathcal{A}_p^m} \left\{ \mathcal{R}_t(\mathbf{W}) \defeq \expec\left[\ell\left(X,\mathbf{W}\right) | \|X\|_\infty > t\right] \right\}.
\end{align*}
Based on the observations $\{X_{(1)},\ldots,X_{(k)}\}$, the optimization problem consists in finding a mixture matrix minimizing the empirical risk
\begin{equation}
\label{original_problem}
\widehat{\mathbf{W}}_k \in \argmin_{\mathbf{W} \in \mathcal{A}_p^m} \left\{ \widehat{\mathcal{R}}_k(\mathbf{W})= \frac{1}{k} \sum_{i=1}^k \ell\left(X_{(i)},\mathbf{W}\right) \right\}
\end{equation}
Note that $\mathcal{A}_p^m$ is a closed and bounded set hence compact \citep{bourbaki2007topologie} thus there exists at least one solution which can be reached. The minimization problem of Eq. \eqref{original_problem} can be rewritten as
\begin{equation*}
\label{rephrased_problem}
\widehat{\mathbf{W}}_k \in \argmax_{\mathbf{W} \in \mathcal{A}_p^m} \frac{1}{k} \sum_{i = 1}^{k} g_{\mathbf{W}}(X_{(i)}).
\end{equation*}
The index of the column representing a good mixture can be defined with the mapping
\begin{align*}
\varphi : \llbracket 1,k \rrbracket \to \llbracket 1,m \rrbracket, \quad \varphi(i)= \argmax_{1 \leq j \leq m} \widetilde{X}_{(i)}^j
\end{align*}
and the optimization problem becomes
\begin{equation}
\label{discrete_problem}
\argmax_{\mathbf{W} \in \mathcal{A}_p^m} \left\{ \frac{1}{k} \sum_{i = 1}^{k} (\mathbf{X} \mathbf{W})_{i}^{\varphi(i)} = \frac{1}{k}\sum_{i = 1}^{k} e_i (\mathbf{X}\mathbf{W}) e^{\varphi(i)} \right\}.
\end{equation}
\textbf{Illustrative example.} As a first go, consider the following example showing the way the matrix $\mathbf{W}$ recovers the different clusters. Assume that the vector $X \in \rset_+^p$ is exactly coming from a mixture of $m$ disjoint clusters $K_1,\ldots,K_m$ and for each sample $\mathbf{X}_i$, there exists $K_j$ such that $\|\mathbf{X}_i^{(K_j)}\|_1 = \|\mathbf{X}_i\|_1$. For all $j \in \llbracket 1,m \rrbracket$, denote $U^{j} \in [0,1]^p$ the uniform vector with support $K_j$, \textit{i.e.}, $U^j = (1/|K_j|)^{(K_j)}$. A solution to the optimization problem is given by any column-permutation of the matrix ${\mathbf{W}}$ whose columns are the vectors $U^j$. Indeed, the transformed data matrix is
$\widetilde{\mathbf{X}} = \mathbf{X}\mathbf{W}$ and for any sample $\mathbf{X}_i$ whose features are coming from a cluster $K_j$, we have
\begin{align*}
\forall l \neq j, \quad \widetilde{\mathbf{X}}_i^j = \mathbf{X}_i U^j = \mathbf{X}_i^{(K_j)} U^j \geq \mathbf{X}_i U^l = \widetilde{\mathbf{X}}_i^l.
\end{align*}
Taking $\varphi(i)= \argmax_{1 \leq l \leq m} \widetilde{\mathbf{X}}_i^l$ exactly recovers the cluster of index $j=\varphi(i)$. In the case where the large features of the different sample $\mathbf{X}_i$ are all equal, then the columns of the mixture matrix $\mathbf{W}$ tend exactly to uniform vectors with restricted support. 

\subsection{Optimization on the Simplex}

\textbf{Problem relaxation.} One can directly solve the linear program \eqref{discrete_problem} but this formulation suffers from drawbacks. First, the solution could belong to a vertex of the simplex and would induce a unique direction. Second, it involves finding the mapping $\varphi$ among all the possible combinations which can be prohibited when $k$ or $p$ increases. Thus, one can solve a relaxed version of Eq. \eqref{discrete_problem} by introducing another matrix of mixtures $\mathbf{Z} \in \mathcal{A}_m^k$. The relaxed problem is
\begin{align}
\label{relaxed_problem}
(\widehat{\mathbf{W}}_k, \widehat{\mathbf{Z}}_k) &\in \argmax_{(\mathbf{W},\mathbf{Z}) \in  \mathcal{A}_p^m \times \mathcal{A}_m^k} \frac{1}{k} \sum_{i = 1}^{k}  \mathbf{X}_i \mathbf{W} \mathbf{Z}^i.
\end{align}
\textbf{Optimization problem.} We recognize the trace operator in Eq. \eqref{relaxed_problem} which is linear and can define an objective function $f: \mathcal{A}_p^m \times \mathcal{A}_m^k \to \rset$ that we need to maximize:
\begin{align*}
\label{def:obj_func}
\left\{
    \begin{array}{ll}
    	(\widehat{\mathbf{W}}_k, \widehat{\mathbf{Z}}_k) \in \argmax_{(\mathbf{W},\mathbf{Z})} f(\mathbf{W},\mathbf{Z}) \\
        f(\mathbf{W},\mathbf{Z}) = Tr(\mathbf{XWZ})/k
    \end{array}
\right.
\end{align*}
The objective function $f$ is bilinear in finite dimension hence continuous. Since maximization occurs on compact sets, there is at least one solution $(\widehat{\mathbf{W}}_k, \widehat{\mathbf{Z}}_k)$. However, it is not unique since any column-permutation of $\widehat{\mathbf{W}}_k$ along with the associated row-permutation of $\widehat{\mathbf{Z}}_k$ is also a valid solution. Indeed, any column (\textit{resp.} row) permutation consists of a multiplication on the right (\textit{resp.} left) side by a permutation matrix. For any $\sigma \in \mathfrak{S}_k$, consider the permutation matrix $\mathbf{P}_{\sigma} = (\delta_{i,\sigma(j)})_{1 \leq i,j \leq m}$. We have $\mathbf{P}_{\sigma}^T = \mathbf{P}_{\sigma^{-1}}$ so that
\begin{align*}
    (\widehat{\mathbf{W}}_k \mathbf{P}_\sigma)( \mathbf{P}_\sigma^T \widehat{\mathbf{Z}}_k) = \widehat{\mathbf{W}}_k (\mathbf{P}_\sigma \mathbf{P}_{\sigma^{-1}}) \widehat{\mathbf{Z}}_k= \widehat{\mathbf{W}}_k \widehat{\mathbf{Z}}_k.
\end{align*}
One may refer to \cite{meilua2006uniqueness,meilua2007comparing} for a discussion on the permutations of clustering solutions.

\textbf{Regularization.}  The constraint of disjoint clusters can be satisfied by forcing the columns of the mixture matrix $\mathbf{W}$ to be orthogonal, \textit{i.e.}, for all $i < j, \langle W^i, W^j \rangle = 0$. This yields a penalized version of the objective function with a regularization parameter $\lambda > 0$
\begin{equation*} \label{def:reg_func}
\left\{
    \begin{array}{ll}
    	(\widehat{\mathbf{W}}_k, \widehat{\mathbf{Z}}_k) \in \argmax_{(\mathbf{W},\mathbf{Z})} f_{\lambda}(\mathbf{W},\mathbf{Z}) \\
        f_{\lambda}(\mathbf{W},\mathbf{Z}) = Tr(\mathbf{X}\mathbf{W}\mathbf{Z})/k - \lambda \sum_{i < j} \langle W^i, W^j \rangle
    \end{array}
\right.
\end{equation*}
with partial derivatives given by
\begin{equation*}
\left\{
    \begin{array}{ll}
    	\nabla_\mathbf{Z} f_{\lambda}(\mathbf{W}, \mathbf{Z})&= (\mathbf{X}\mathbf{W})^T/k \\
        \nabla_\mathbf{W} f_{\lambda}(\mathbf{W},\mathbf{Z})&= (\mathbf{Z} \mathbf{X})^T /k-\lambda \widetilde{\mathbf{W}}, \widetilde W^j = \sum_{i < j} W^i.
    \end{array}
\right.
\end{equation*}

\textbf{Update rule.} \label{th1} The optimization problem can be addressed using an alternate scheme by computing projected gradient ascent at each iteration
\begin{equation} \label{eq:algorithm}
\left\{
    \begin{array}{ll}
        \mathbf{W}_{k+1} &= \Pi_{\mathcal{S}} \left(\mathbf{W}_k + \delta_k^{W} \nabla_\mathbf{W} f_{\lambda}(\mathbf{W}_k, \mathbf{Z}_k)\right) \\
		\mathbf{Z}_{k+1} &= \Pi_{\Delta_{m}} \left(\mathbf{Z}_k + \delta_k^{Z} \nabla_\mathbf{Z} f_{\lambda}(\mathbf{W}_{k+1}, \mathbf{Z}_k)\right)
    \end{array}
\right.
\end{equation}
where $\Pi_{\mathcal{S}}(\cdot),\Pi_{\triangle_m}(\cdot)$ are respectivetly the projection of each column onto a convex set $\mathcal{S} \subset \Delta_p$ and onto the probability simplex $\Delta_m$. The learning rates $\delta_k^{W},\delta_k^{Z}$ are step sizes found by backtracking line search. The convergence property of the optimization procedure is the same as the convergence of projected gradient descent as detailed in \citet{calamai1987projected,dunn1987convergence}.

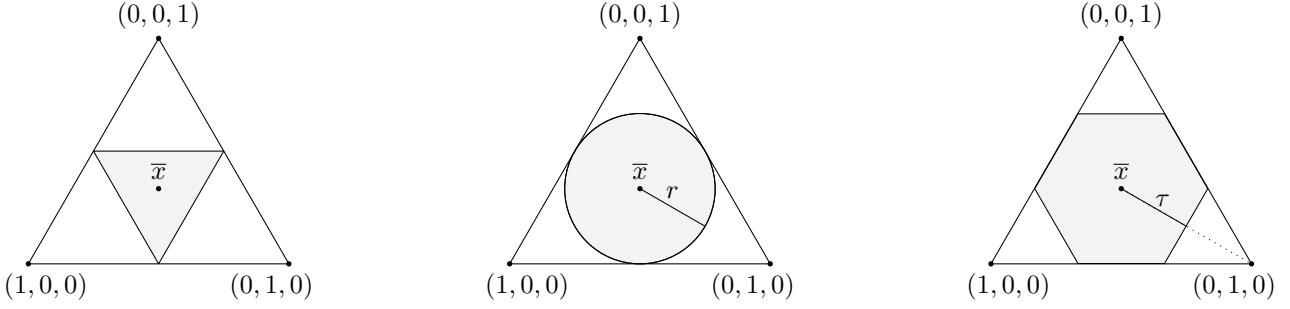
\begin{figure*}[t]
\centering
\begin{tikzpicture}[line join=round, scale=1.0]
]
\node[draw, regular polygon, regular polygon sides=3,
      minimum size=2\Radius, inner sep=0pt, outer sep=0pt] (tri) {};
\node[draw, regular polygon, shape border rotate=180, regular polygon sides=3,
      minimum size=\Radius, inner sep=0pt, outer sep=0pt] (tri) {};
\node[fill,opacity=0.05, regular polygon, shape border rotate=180, regular polygon sides=3,
      minimum size=\Radius, inner sep=0pt, outer sep=0pt] (tri) {};

\draw (0,2) node[above] {$(0,0,1)$};
\draw (-1.5,-1.) node[below] {$(1,0,0)$};
\draw (1.5,-1.) node[below] {$(0,1,0)$};
\draw node[above,pos=.4] {$\ol x$};
\fill (0,0) circle (1pt);
\fill (0,2) circle (1pt);
\fill (-1.732,-1.) circle (1pt);
\fill (1.732,-1.) circle (1pt);
\end{tikzpicture}
\hfill
    \centering
    \begin{tikzpicture}[line join=round, scale=1.]

\node[draw, circle, minimum width=\Radius, inner sep=0pt, outer sep=0pt] (outer) {};
\node[draw, circle, minimum width=\Radius, inner sep=0pt, outer sep=0pt] (inner) {};
\node[fill,opacity=0.05, circle, minimum width=\Radius, inner sep=0pt, outer sep=0pt] (inner) {};

\node[draw, regular polygon, regular polygon sides=3,
      minimum size=2\Radius, inner sep=0pt, outer sep=0pt] (tri) {};

\draw (0,2) node[above] {$(0,0,1)$};
\draw (-1.5,-1.) node[below] {$(1,0,0)$};
\draw (1.5,-1.) node[below] {$(0,1,0)$};
\draw node[above,pos=.4] {$\ol x$};
\draw (outer.center)--(inner.-30) node[midway,above]{$r$};
\fill (outer.center) circle (1pt);
\fill (0,2) circle (1pt);
\fill (-1.732,-1.) circle (1pt);
\fill (1.732,-1.) circle (1pt);
\end{tikzpicture}
\hfill
\centering
\begin{tikzpicture}[line join=round, scale=1.]
\node[draw, regular polygon, regular polygon sides=3,
      minimum size=2\Radius, inner sep=0pt, outer sep=0pt] (tri) {};
\node[draw, regular polygon, shape border rotate=180, regular polygon sides=6,
      minimum size=1.15\Radius, inner sep=0pt, outer sep=0pt] (tri) {};
\node[fill,opacity=0.05, regular polygon, shape border rotate=180, regular polygon sides=6,
      minimum size=1.15\Radius, inner sep=0pt, outer sep=0pt] (tri) {};

\draw (0,2) node[above] {$(0,0,1)$};
\draw (-1.5,-1.) node[below] {$(1,0,0)$};
\draw (1.5,-1.) node[below] {$(0,1,0)$};
\draw node[above,pos=.4] {$\ol x$};
\draw (outer.center)--(inner.-30) node[midway,above]{};
\draw (0.55,-0.15) node  {$\tau$};
\draw [dotted](0,0) --(1.732,-1);
\fill (outer.center) circle (1pt);
\fill (0,2) circle (1pt);
\fill (-1.732,-1.) circle (1pt);
\fill (1.732,-1.) circle (1pt);
\end{tikzpicture}

    \caption{Simplex of $\rset^3$ with $\mathcal{S}_3^{\ell_1}$ (left), $\mathcal{S}_3^{\ell_2}$(center) and the $\mathbb{M}$-set $\mathcal{S}_3^{\tau}$(right).}
    \label{fig:simplex}
\end{figure*}{}

\textbf{Projection step on $\mathcal{S}$.} In order to recover clusters that are not unit sets, we want to avoid the vertices of the simplex. Thus, we perform a projection step $\Pi_{\mathcal{S}}(\cdot)$ of each column of $\mathbf{W}$ onto a convex set $\mathcal{S}$. Several choices are to be considered, as illustrated in Figure \ref{fig:simplex}. Denote $\bar x = (1/p, \ldots, 1/p)$ the barycenter of the probability simplex $\Delta_p$ and consider the following manifolds: \\
\textit{(i) $\ell_1$ incircle:} the coordinate permutations of $(0,1/(p-1),\ldots,1/(p-1))$ are the centers of the faces of $\Delta_p$ and they define a reversed and scaled simplex $\mathcal{S}_p^{\ell_1}$. \\
\textit{(ii) $\ell_2$ incircle:} consider the euclidian ball $B_{2,p}(\bar x,r) = \{x \in \rset^p | \|x - \bar x\|_2 \leq r\}$. The radius value $r_p=1/\sqrt{p(p-1)}$ yields the $\ell_2$ inscribed ball of $\Delta_p$ along with $\mathcal{S}_p^{\ell_2} = \Delta_p \cap B_{2,p}(\bar x,r_p)$.  \\
\textit{(iii) $\mathbb{M}$-\textit{set}:} The previous manifolds do not scale well as the dimension grows and we shall discuss some theoretical results to see that their hypervolumes become very small. To escape from the curse of dimensionality, we consider the convex set where we cut off the vertices using a threshold $\tau$ of the distance $L = \|\bar x - e_j\|_2= \sqrt{(p-1)/p}$ between the barycenter and a vertex. It is also the intersection of the simplex $\Delta_p$ and an $\ell_{\infty}$ ball. We call this manifold the $\mathbb{M}$-\textit{set} $\mathcal{S}_p^{\tau}$ defined as
\begin{align*}
\mathcal{S}_p^{\tau} = \left\{x \in \Delta_p | \max_{1 \leq j \leq p} \left \langle x - \bar x, e_j - \bar x \right\rangle \leq \tau \|e_j - \bar x\|_2 \right\}.
\end{align*}
Define the radius $r_{\infty}^p(\tau) = 1 - (1-\tau)(p-1)/p$ then the $\mathbb{M}$-\textit{set} may be seen as the intersection of the simplex with a particular $\ell_{\infty}$ ball as
\begin{align*}
\mathcal{S}_p^{\tau} = \Delta_p \cap B_{\infty, p}\left(\bar x, \tau L\right) = \Delta_p \cap B_{\infty, p}\left(0, r_{\infty}^p(\tau) \right).
\end{align*}
The projection onto the simplex is a well-studied subject \citep{daubechies2008accelerated,duchi2008efficient,chen2011projection,Condat2016}. For the projection onto the intersection of convex sets, one can perform a naive approach of alternate projections \citep{gubin1967method} or some refinements using the idea of Dykstra's algorithm \citep{dykstra1983algorithm,boyle1986method,bregman2003finding}.

\subsection{MEXICO Algorithm}
Starting from random matrices $(\mathbf{W}_0,\mathbf{Z}_0) \in \mathcal{A}_p^m \times \mathcal{A}_m^k$, the update rule of Eq.~\eqref{eq:algorithm} returns a pair of matrices $(\mathbf{W}_{mex},\mathbf{Z}_{mex})$ that are of great interest to analyze the dependence structure of the most extreme data and thus the support of extremes. On the one hand, the mixture matrix $\mathbf{W}_{mex}$ gives insights about the different clusters of features that are large simultaneously. On the other hand, the matrix $\mathbf{Z}_{mex}$ gives information about the probability of belonging to each cluster. 
Each column $\mathbf{W}_{mex}^j$ represents a cluster $K_j$ and for each sample $\mathbf{X}_i, i \in \llbracket 1,k \rrbracket$, the $j^{th}$-row of the column $\mathbf{Z}_{mex}^i$ is the confidence for $X_i$ to belong to the cluster $K_j$.

A detailed pseudo-code of MEXICO is provided below in Algorithm \ref{alg:Mexico}. Since the margins of the data may be unknown, one could work with $\widehat T$ which is the empirical counterpart of the Pareto standardization $T$ as detailed in Appendix \ref{appendix:Pareto_scaling}. The output of the algorithm may be used for feature clustering (FC) or anomaly detection (AD) tasks. 
\begin{algorithm}[!h]
\caption{MEXICO algorithm}
\begin{algorithmic}[1]
\REQUIRE Training data $(X_1,\ldots,X_n), 0<m<p, \lambda>0$ and rank $k(=\lfloor n \gamma\rfloor)$.
\STATE Initialize $(\mathbf{W}_0,\mathbf{Z}_0) \in \mathcal{A}_p^m \times \mathcal{A}_m^n$.
\STATE Standardize the data $\widehat V_{(i)} = \widehat T(X_{(i)})$ (see Remark \ref{rmk:ParetoStandardization}).
\STATE Sort training data by decreasing order of magnitude
\qquad~ \qquad~\ $ \|\widehat V_{(1)}\|_\infty \geq \ldots \geq  \|\widehat V_{(n)}\|_\infty$.
\STATE Consider the set of $k$ extreme training data  $\widehat V_{(1)}, \ldots, \widehat V_{(k)}$.
\STATE Compute $(\mathbf{W}_{mex},\mathbf{Z}_{mex}) \in \argmax_{(\mathbf{W},\mathbf{Z})} f_{\lambda}(\mathbf{W},\mathbf{Z})$ using update rule  \eqref{eq:algorithm}.
\STATE Given a new input ${X}_{\text{new}}$ standardized as $\widehat V_{\text{new}}$ with $\|\widehat V_{\text{new}}\|_\infty \geq \|\widehat V_{(k)}\|_\infty$, compute $\widetilde V_{\text{new}}=\widehat V_{\text{new}} \mathbf{W}_{mex}$.
\STATE Compute predicted cluster $\varphi_0 = \argmax_{1 \leq j \leq m}{\widetilde V_{\text{new}}^j}$.
\STATE (\textbf{FC}) Return cluster $\varphi_0$. \\ (\textbf{AD}) Return score $\ell(\widetilde V_{\text{new}},\mathbf{W}_{{mex}})$.
\end{algorithmic}
\label{alg:Mexico}
\end{algorithm}

\section{Theoretical Study} \label{sec:th_results}
This section provides some theoretical results. First, a theoretical analysis of the $\mathbb{M}$-\textit{set} is established. In order to compare the different manifolds of the previous section, we analyze the volume reduction performed in each case. Second, a non-asymptotic bound for the excess risk is detailed.

\begin{theorem}[Volume and ratio]\label{thm:thm2} Consider the probability simplex $\Delta_p$ and the different manifolds $\mathcal{S}_p^{\ell_1}, \mathcal{S}_p^{\ell_2}, \mathcal{S}_p^{\tau}$. For any bounded set $\mathcal{D} \subset \rset^p$, define its hypervolume $\mathcal{V}ol (\mathcal{D})$ and its ratio $\rho(\mathcal{D})$ as
\begin{align*}
\mathcal{V}ol (\mathcal{D}) = \int_{\rset^p} \un_{\mathcal{D}}(x) \mathrm{d}x, \quad \rho(\mathcal{D}) = \mathcal{V}ol (\mathcal{D}) / \mathcal{V}ol (\Delta_p).
\end{align*}
\begin{tabular}{|c|l|}
\hline
$\mathcal{S}$ & \multicolumn{1}{|c|}{Hypervolume $\mathcal{V}ol(\mathcal{S})$}  \\
\hline
 $\Delta_p$ & $\frac{\sqrt{p}}{\Gamma(p)}$ \\
\hline
$\mathcal{S}_p^{\ell_1}$ & $\frac{\sqrt{p}}{\Gamma(p)} \left(\frac{1}{(p-1)^{(p-1)}}\right)$  \\
\hline
 $\mathcal{S}_p^{\ell_2}$ & $\frac{\sqrt{p}}{\Gamma(p)} \left(\frac{\Gamma(p)}{\Gamma\left(\frac{p+1}{2}\right)} \frac{\pi^{(p-1)/2}}{\sqrt{p^p (p-1)^{(p-1)}}}\right)$  \\
\hline
$\mathcal{S}_p^{\tau}$ & $\frac{\sqrt{p}}{\Gamma(p)} \left(1-p(1-\tau)^{(p-1)}\left(\frac{p-1}{p}\right)^{(p-1)}\right)$  \\
\hline
\end{tabular}

The corresponding ratios are given by
\begin{align*}
\left\{
    \begin{array}{ll}
        \rho(\mathcal{S}_p^{\ell_1}) &= \frac{1}{(p-1)^{(p-1)}} \\
        \rho(\mathcal{S}_p^{\ell_2}) &= \frac{\Gamma(p)}{\Gamma\left(\frac{p+1}{2}\right)} \frac{\pi^{(p-1)/2}}{\sqrt{p^p (p-1)^{(p-1)}}} \\
        \rho(\mathcal{S}_p^{\tau}) &= 1-p\left[(1-\tau)\left(\frac{p-1}{p}\right)\right]^{(p-1)}
    \end{array}
\right.
\end{align*}
Moreover, when the dimension grows $p \to +\infty$ and for a fixed $\tau \in (0,1)$, we have $\rho(\mathcal{S}_p^{\ell_1}) \to 0$, $\rho(\mathcal{S}_p^{\ell_2}) \to 0$ and $\rho(\mathcal{S}_p^{\tau}) \to 1.$ Among studied subsets, in a high-dimensional setting the $\mathbb{M}$-\textit{set} is the only one not collapsing towards a unit set.
\end{theorem}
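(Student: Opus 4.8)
The plan is to compute each hypervolume directly, treating the probability simplex $\Delta_p$ as a $(p-1)$-dimensional object embedded in $\rset^p$, and then to read off the ratios and their limits.

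\textbf{Volume of $\Delta_p$ and of $\mathcal{S}_p^{\ell_1}$.}
First I would recall the standard fact that the $(p-1)$-dimensional Lebesgue measure of the standard simplex $\{x \in \rset^p_+ : \sum_j x_j = 1\}$ equals $\sqrt{p}/\Gamma(p)$; this comes from projecting onto the coordinate simplex $\{u \in \rset^{p-1}_+ : \sum u_j \le 1\}$, which has volume $1/(p-1)!=1/\Gamma(p)$, and multiplying by the Jacobian factor $\sqrt{p}$ coming from the change of variables to the affine hyperplane. For $\mathcal{S}_p^{\ell_1}$, the key observation is that it is itself a simplex: its vertices are the face-centers, i.e.\ the coordinate permutations of $(0,\tfrac1{p-1},\dots,\tfrac1{p-1})$. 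This is a scaled and point-reflected copy of $\Delta_p$; the linear map sending $\Delta_p$ to $\mathcal{S}_p^{\ell_1}$ has ratio $-1/(p-1)$ on the $(p-1)$-dimensional affine hull, so $\mathcal{V}ol(\mathcal{S}_p^{\ell_1}) = |{-1/(p-1)}|^{p-1}\,\mathcal{V}ol(\Delta_p)$, giving the stated value and $\rho(\mathcal{S}_p^{\ell_1}) = (p-1)^{-(p-1)}$.

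\textbf{Volume of $\mathcal{S}_p^{\ell_2}$ and of $\mathcal{S}_p^{\tau}$.}
For $\mathcal{S}_p^{\ell_2} = \Delta_p \cap B_{2,p}(\bar x, r_p)$, since $r_p = 1/\sqrt{p(p-1)}$ is exactly the inradius of $\Delta_p$ (distance from barycenter to a face), the ball meets the affine hull of $\Delta_p$ in a $(p-1)$-dimensional Euclidean ball of radius $r_p$ which lies entirely inside $\Delta_p$; hence its volume is the volume of a $(p-1)$-ball, $\pi^{(p-1)/2} r_p^{p-1}/\Gamma(\tfrac{p+1}{2})$, and dividing by $\sqrt{p}/\Gamma(p)$ and substituting $r_p$ yields the table entry. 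For $\mathcal{S}_p^{\tau}$, I would use the representation already given in the text, $\mathcal{S}_p^{\tau} = \Delta_p \cap B_{\infty,p}(0, r_\infty^p(\tau))$ with $r_\infty^p(\tau) = 1 - (1-\tau)(p-1)/p$; equivalently $\mathcal{S}_p^{\tau} = \{x \in \Delta_p : x_j \le r_\infty^p(\tau)\ \forall j\}$. Its complement in $\Delta_p$ is $\bigcup_{j} \{x \in \Delta_p : x_j > r_\infty^p(\tau)\}$, and because $r_\infty^p(\tau) > 1/2$ for the relevant range these $p$ "corner" pieces are pairwise disjoint. Each corner piece $\{x \in \Delta_p : x_j > c\}$ is a scaled copy of $\Delta_p$ with linear ratio $(1-c)$, so has volume $(1-c)^{p-1}\mathcal{V}ol(\Delta_p)$. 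With $c = r_\infty^p(\tau)$ one gets $1 - c = (1-\tau)(p-1)/p$, so $\mathcal{V}ol(\mathcal{S}_p^{\tau}) = \bigl(1 - p\,[(1-\tau)(p-1)/p]^{p-1}\bigr)\mathcal{V}ol(\Delta_p)$, which is the table entry, and the ratio follows immediately. The cleanest obstacle here is verifying the disjointness claim (that no point of $\Delta_p$ can have two coordinates exceeding $r_\infty^p(\tau)$), which reduces to $2 r_\infty^p(\tau) > 1$, i.e.\ $(1-\tau)(p-1)/p < 1/2$; this holds for all $p$ once $\tau \ge 1/2$ and for $p$ large for any fixed $\tau \in (0,1)$, so for the asymptotic statement it is harmless, and I would flag the mild restriction on $(\tau,p)$ needed for the exact closed form.

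\textbf{Asymptotics.}
Finally I would take $p \to \infty$ with $\tau \in (0,1)$ fixed. For $\rho(\mathcal{S}_p^{\ell_1}) = (p-1)^{-(p-1)} \to 0$ this is immediate. For $\rho(\mathcal{S}_p^{\ell_2})$, write it as $\frac{\Gamma(p)}{\Gamma((p+1)/2)}\cdot \frac{\pi^{(p-1)/2}}{\sqrt{p^p(p-1)^{p-1}}}$ and apply Stirling: the $\Gamma$-ratio grows only like $C^p p^{(p-1)/2}$ up to polynomial factors while the denominator contains $p^{p/2}(p-1)^{(p-1)/2}$, so the whole expression decays super-exponentially to $0$ (alternatively, bound it by the $\ell_2$ case being contained in a ball of vanishing relative radius). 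For $\rho(\mathcal{S}_p^{\tau}) = 1 - p\,[(1-\tau)(p-1)/p]^{p-1}$, note $(1-\tau)(p-1)/p \to (1-\tau) < 1$, so $[(1-\tau)(p-1)/p]^{p-1} \to 0$ exponentially fast, and the extra factor $p$ cannot compensate an exponential decay; hence $\rho(\mathcal{S}_p^{\tau}) \to 1$. This establishes that among the three manifolds only the $\mathbb{M}$-set retains full relative volume in high dimension. I expect the main obstacle to be purely bookkeeping: getting the Jacobian factor $\sqrt{p}$ consistent across all four rows and confirming the inradius/corner-disjointness geometric facts; none of it is deep, but the constants must line up exactly with the table.
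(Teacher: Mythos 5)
Your computations follow essentially the same route as the paper for all four sets: $\Delta_p$ via the standard $(p-1)$-dimensional volume $\sqrt{p}/\Gamma(p)$, $\mathcal{S}_p^{\ell_1}$ as a simplex scaled by $1/(p-1)$, $\mathcal{S}_p^{\ell_2}$ as the $(p-1)$-dimensional inscribed ball of radius $r_p=1/\sqrt{p(p-1)}$, and $\mathcal{S}_p^{\tau}$ by removing from $\Delta_p$ the $p$ corner simplices $\{x\in\Delta_p: x_j> r_\infty^p(\tau)\}$, each similar to $\Delta_p$ with ratio $(1-\tau)(p-1)/p$; the paper does exactly this (phrased via side lengths $\sqrt{2}/(p-1)$ and $\sqrt{2}(1-\tau)(p-1)/p$), and your Stirling argument for $\rho(\mathcal{S}_p^{\ell_2})\to 0$ is fine.

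The one genuine flaw is in your handling of the disjointness of the corner pieces. The corners are pairwise disjoint (up to null sets) iff $r_\infty^p(\tau)\ge 1/2$, i.e. $(1-\tau)(p-1)/p\le 1/2$. But $(1-\tau)(p-1)/p$ is \emph{increasing} in $p$ and tends to $1-\tau$, so your claim that the condition holds ``for $p$ large for any fixed $\tau\in(0,1)$'' is backwards: for fixed $\tau<1/2$ it holds only for small $p$ and \emph{fails} for all sufficiently large $p$, which is precisely the regime of the asymptotic statement, so it is not ``harmless'' as you argue. The correct repair is cheap: when the corners overlap, subadditivity gives the one-sided bound $\rho(\mathcal{S}_p^{\tau})\ge 1-p\left[(1-\tau)(p-1)/p\right]^{p-1}$, and since $\rho\le 1$ trivially and the right-hand side tends to $1$ (as you argue, exponential decay beats the factor $p$), the limit $\rho(\mathcal{S}_p^{\tau})\to 1$ holds for every fixed $\tau\in(0,1)$. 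The exact closed-form equality for $\rho(\mathcal{S}_p^{\tau})$, however, is only valid under the restriction $(1-\tau)(p-1)/p\le 1/2$ (for instance, at $\tau=0$, $p=3$ the formula is negative while the true volume is $0$); the paper's own proof silently makes the same disjointness assumption, so flagging it is to your credit — you just need to state the regime correctly and switch to the union-bound inequality for the asymptotics.
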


\begin{figure}[h]
\centering
\includegraphics[width=(0.5\textwidth)]{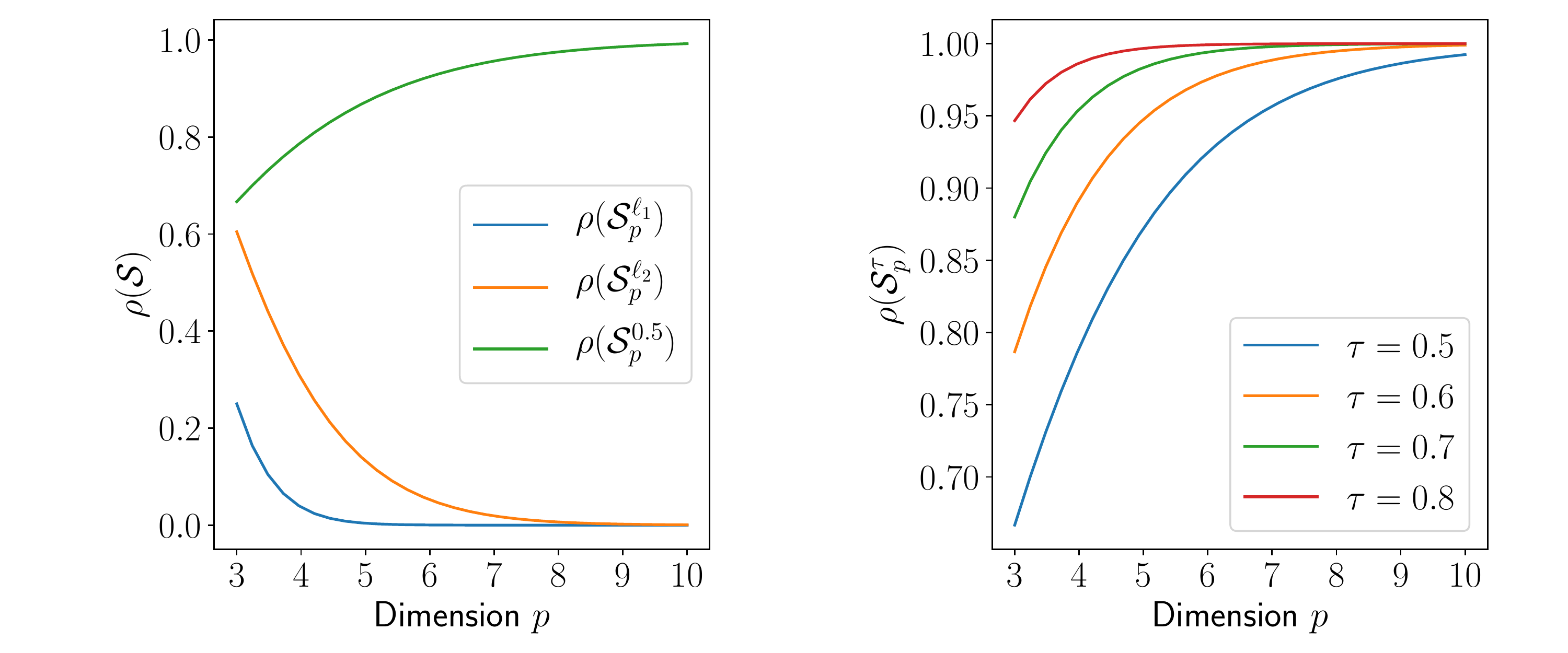}
\caption{Evolution of the  Ratio Volume with dimension $p$.}
\label{fig:ratio_ball}
\end{figure}

\begin{remark} (Selection of $\tau$)
With a high reduction of the probability simplex (\textit{i.e.} $\tau \to 0$), the vertices are avoided but discriminating the clusters is harder as the $\mathbb{M}$-\textit{set} tends to the barycenter of the simplex. This trade-off motivates the choice of the threshold $\tau$ and Figure \ref{fig:ratio_ball} shows the evolution of the ratios $\rho$ for the different manifolds.
\label{rem:sel_tau}
\end{remark}

The following proposition, whose proof is deferred to the appendix, shows that MEXICO algorithm can be seen as a contraction mapping.
\begin{proposition}[Lipschitz mapping]
  \label{prop:contractant} Given $x_1, x_2 \in \rset^p_+$ with norms greater than $t > 0$ 
  the application
$x \mapsto x\mathbf{W}_t^\star$ is $\frac{m}{2}$-lipschitz continuous \textit{i.e.}, 
\begin{align*}
  ||x_1 \mathbf{W}_t^\star - x_2 \mathbf{W}_t^\star ||_2 \leq \frac{m}{2}||x_1 - x_2 ||_2.
\end{align*}
Moreover, if $x_1$ and $x_2$ belong to the same feature cluster $K$ then
  $||x_1 \mathbf{W}_t^\star - x_2 \mathbf{W}_t^\star ||_2 \leq \frac{1}{2}||x_1 - x_2 ||_2.$ Hence, the transformation induced by MEXICO can be considered as a cluster contraction mapping \cite{boyd1969nonlinear}.
\label{prop:contraction}
\end{proposition}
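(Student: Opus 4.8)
The plan is to reduce everything to a bound on the spectral norm (operator $2$-norm) of the matrix $\mathbf{W}_t^\star \in \mathcal{A}_p^m$, since $\|x_1\mathbf{W}_t^\star - x_2\mathbf{W}_t^\star\|_2 = \|(x_1-x_2)\mathbf{W}_t^\star\|_2 \le \|\mathbf{W}_t^\star\|_{\mathrm{op}}\,\|x_1-x_2\|_2$. So the whole Lipschitz claim is the statement $\|\mathbf{W}_t^\star\|_{\mathrm{op}} \le m/2$ (and, in the within-cluster case, a sharper bound of $1/2$ on the restricted map). First I would recall that every column $W^j$ of $\mathbf{W}_t^\star$ lies in the probability simplex $\Delta_p$, so $\|W^j\|_1 = 1$ and hence $\|W^j\|_2 \le 1$; moreover the Frobenius norm satisfies $\|\mathbf{W}_t^\star\|_F^2 = \sum_{j=1}^m \|W^j\|_2^2 \le m$, giving $\|\mathbf{W}_t^\star\|_{\mathrm{op}} \le \|\mathbf{W}_t^\star\|_F \le \sqrt{m}$. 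That crude bound is not yet $m/2$; to get the factor claimed I would instead use the row–column structure: writing $\mathbf{W}_t^\star = (\mathbf{W}_i^j)$, for any unit vector $u$ one has $\|u\mathbf{W}_t^\star\|_2^2 = \sum_j (\sum_i u_i \mathbf{W}_i^j)^2 \le \sum_j \|u\|_2^2 \sum_i (\mathbf{W}_i^j)^2 = \sum_{i,j}(\mathbf{W}_i^j)^2 \le \sum_j \sum_i \mathbf{W}_i^j \cdot \max_i \mathbf{W}_i^j \le \sum_j 1 = m$ using $\sum_i \mathbf{W}_i^j = 1$ and $\mathbf{W}_i^j \le 1$; sharpening this with the fact that, thanks to the projection step onto $\mathcal{S} \subset \Delta_p$ performed in MEXICO, every entry of $\mathbf{W}_t^\star$ is bounded away from $1$ — in fact the $\mathbb{M}$-\textit{set} caps each coordinate at $r_\infty^p(\tau) = 1-(1-\tau)(p-1)/p$, and in the idealized ``uniform vector'' regime each nonzero entry is $1/|K_j| \le 1/2$ whenever clusters are nontrivial — yields $\max_i \mathbf{W}_i^j \le 1/2$ and therefore $\|u\mathbf{W}_t^\star\|_2^2 \le \sum_j \tfrac12 \sum_i \mathbf{W}_i^j = m/2$; taking square roots and then being slightly more careful with the constant (the factor $m/2$ rather than $\sqrt{m/2}$ comes from bounding each of the $m$ inner sums $|\sum_i u_i \mathbf{W}_i^j| \le \|u\|_2 \|W^j\|_2 \le \|u\|_2 \cdot \tfrac12$ and summing, i.e. $\|u\mathbf{W}_t^\star\|_2 \le \sum_{j=1}^m |u W^j| \le m/2$) gives the stated $\tfrac{m}{2}$-Lipschitz bound.

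For the within-cluster refinement, I would observe that if $x_1, x_2$ both belong to cluster $K = K_{j_0}$, then their difference $x_1 - x_2$ is (up to the small off-support noise that vanishes asymptotically) supported on $K$, and the optimal mixture matrix has the property that the only column whose support meets $K$ is $W^{j_0}$ (the clusters $K_j$ are disjoint by construction, and each column $W^j$ is supported on $K_j$); every other column $W^j$ with $j \ne j_0$ satisfies $\langle x_1 - x_2, W^j\rangle = 0$. Hence $(x_1 - x_2)\mathbf{W}_t^\star$ has a single nonzero coordinate, equal to $\langle x_1 - x_2, W^{j_0}\rangle$, and $\|(x_1-x_2)\mathbf{W}_t^\star\|_2 = |\langle x_1-x_2, W^{j_0}\rangle| \le \|x_1-x_2\|_2\,\|W^{j_0}\|_2 \le \tfrac12\|x_1-x_2\|_2$, using $\|W^{j_0}\|_2 \le 1/2$ as above. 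I would then invoke \cite{boyd1969nonlinear} to conclude that the induced transformation is a cluster contraction mapping, i.e. a contraction on each cluster.

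The main obstacle I anticipate is pinning down precisely why $\max_{i,j}\mathbf{W}_i^j \le 1/2$ (equivalently $\|W^j\|_2 \le 1/2$) for the optimizer $\mathbf{W}_t^\star$, since an arbitrary element of $\mathcal{A}_p^m$ can be a vertex of $\Delta_p$ with a $1$ in some coordinate. This is exactly what the projection step $\Pi_{\mathcal{S}}$ onto the convex set $\mathcal{S}$ (the $\ell_1$-incircle, the $\ell_2$-incircle, or the $\mathbb{M}$-\textit{set}) is designed to rule out — it forces each column to stay in the interior region of the simplex away from the vertices — so the clean statement requires either explicitly assuming $\mathbf{W}_t^\star \in \mathcal{S}$ with $\mathcal{S}$ chosen so that its coordinate-wise sup is $\le 1/2$ (true for $\mathcal{S}_p^{\ell_1}$ and for $\mathcal{S}_p^\tau$ with $\tau \le 1/(p-1)$, and also in the idealized exact-mixture regime of the illustrative example where columns are uniform vectors $U^j$ on clusters of size $\ge 2$), or else stating the bound with the constant $r_\infty^p(\tau)$ in place of $1/2$. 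I would make this dependence explicit at the start of the proof and carry the resulting constant through; the rest is the routine Cauchy–Schwarz estimate sketched above.
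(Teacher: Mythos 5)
Your approach is essentially the paper's. The paper's own proof also works under the idealized structural assumption you flag explicitly: each column of $\mathbf{W}_t^\star$ is taken to be the uniform vector supported on a cluster, $W^{\star l}_i = \un_{i\in K_l}/|K_l|$ with disjoint clusters of size $|K_l|\ge 2$, and then the bound is exactly the column-wise Cauchy--Schwarz estimate you sketch (your operator-norm phrasing is the same computation); the within-cluster case is likewise handled by observing that only the column supported on $K$ contributes. One quantitative slip to fix: from $\sum_i \mathbf{W}_i^j=1$ and $\max_i \mathbf{W}_i^j\le 1/2$ you only get $\|W^j\|_2^2\le 1/2$, i.e.\ $\|W^j\|_2\le 1/\sqrt{2}$, not $\|W^j\|_2\le 1/2$ (with uniform columns $\|W^j\|_2 = 1/\sqrt{|K_j|}$, which is $\le 1/2$ only when $|K_j|\ge 4$). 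So your ``more careful'' triangle-inequality route actually yields $m/\sqrt{2}$; the clean conclusion should come from your sharper computation $\|u\mathbf{W}_t^\star\|_2^2\le m/2$ for unit $u$, giving Lipschitz constant $\sqrt{m/2}$, which implies the stated $m/2$ since $m\ge 2$ --- and this is precisely what the paper does (it bounds the \emph{squared} distance by $(m/2)\|x_1-x_2\|_2^2$). For the within-cluster refinement the same issue means your argument certifies the constant $1/\sqrt{|K|}\le 1/\sqrt{2}$ rather than $1/2$ unless $|K|\ge 4$ (or the entry cap from the projection set is at most $1/4$); the paper's proof, read carefully, has the identical limitation, since ``the same steps on the $l$-th column'' also bound only the squared distance by $\tfrac12\|x_1-x_2\|_2^2$. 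Apart from this constant, your proposal matches the paper's argument, and your insistence on making the assumption on $\mathbf{W}_t^\star$ explicit (uniform columns on disjoint clusters, or an entry bound enforced by $\Pi_{\mathcal{S}}$) is exactly the hypothesis the paper uses implicitly.
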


Finally, the convergence rate of our method can be analyzed through the non-asymptotic bound on the risks \eqref{eq:def_true_risk} and \eqref{eq:def_emp_risk}. The following Theorem provides an upper bound for the excess risk. The proof is given in the appendix.
\begin{theorem}[Non-asymptotic bound]
  \label{th3}
  Consider the risk $\mathcal{R}_{t_\gamma}$ defined in \eqref{eq:def_true_risk} associated to the loss function \eqref{eq:loss_angular} computed on normalized data. Recall that $k = \lfloor n \gamma \rfloor$ and denote by $\mathbf{W}_{mex}$ the mixture matrix obtained by MEXICO. Then for $\delta \in (0,1)$, $n \geq 1$ and $\tau \leq 1$ we have with probability at least $1 - \delta$,
\begin{align*}
&\mathcal{R}_{t_\gamma}(\mathbf{W}_{mex})-\mathcal{R}_{t_\gamma}(\mathbf{W}_{t_\gamma}^\star) \leq \frac{1}{\sqrt{k}} 8\sqrt{2(1-\gamma)\log(4/\delta)} \ + \\
&\frac{1}{k} \bigg(\frac{16}{3}\log (4/\delta) +  8 \sqrt{ 2(1-\gamma) \log(4/\delta)}  + 2  \bigg)  + 2r_{\infty}^p(\tau).
\end{align*}

\end{theorem}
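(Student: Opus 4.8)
The plan is to decompose the excess risk in the standard ERM fashion and then control each piece. Write $\mathcal{R} = \mathcal{R}_{t_\gamma}$ and let $\mathbf{W}_{mex}$ be the output of MEXICO, which by construction minimizes (up to the projection onto $\mathcal{S}_p^\tau$) the empirical risk $\widehat{\mathcal{R}}_k$. First I would split
\begin{align*}
\mathcal{R}(\mathbf{W}_{mex}) - \mathcal{R}(\mathbf{W}_{t_\gamma}^\star)
&= \underbrace{\big(\mathcal{R}(\mathbf{W}_{mex}) - \widehat{\mathcal{R}}_k(\mathbf{W}_{mex})\big)}_{(A)}
+ \underbrace{\big(\widehat{\mathcal{R}}_k(\mathbf{W}_{mex}) - \widehat{\mathcal{R}}_k(\mathbf{W}_{t_\gamma}^\star)\big)}_{(B)} \\
&\quad + \underbrace{\big(\widehat{\mathcal{R}}_k(\mathbf{W}_{t_\gamma}^\star) - \mathcal{R}(\mathbf{W}_{t_\gamma}^\star)\big)}_{(C)}.
\end{align*}
Terms $(A)$ and $(C)$ are bounded by $\sup_{\mathbf{W} \in \mathcal{A}_p^m}\lvert \mathcal{R}(\mathbf{W}) - \widehat{\mathcal{R}}_k(\mathbf{W})\rvert$ (uniform deviation), which I will control by a concentration argument; term $(B)$ is where the restriction to the $\mathbb{M}$-set $\mathcal{S}_p^\tau$ enters. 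If MEXICO were optimizing over all of $\mathcal{A}_p^m$, then $(B) \le 0$; but because each column of $\mathbf{W}$ is forced into $\mathcal{S}_p^\tau \subset \Delta_p$, the true empirical minimizer may be excluded, and $(B)$ picks up an approximation term. I would bound this by observing that for any target matrix whose columns lie in $\Delta_p$, projecting each column onto $\mathcal{S}_p^\tau = \Delta_p \cap B_{\infty,p}(0, r_\infty^p(\tau))$ perturbs $g_{\mathbf{W}}(x) = \max_j (x\mathbf{W})^j$ by at most a quantity controlled by the $\ell_\infty$-radius; since the data are normalized ($\theta = X/\|X\|_\infty$, so $\|\theta\|_\infty = 1$ and $\|\theta\|_1 \le p$) and the loss carries the $1/p$ factor, the resulting bias in $\widehat{\mathcal{R}}_k$ is at most $2 r_\infty^p(\tau)$ — this is the last term in the statement. (The factor 2 accounts for comparing $\mathbf{W}_{mex}$ to $\mathbf{W}^\star_{t_\gamma}$ through an intermediate projected matrix.)

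For the uniform deviation I would note first that, conditionally on the event $\{\|X\|_\infty > t_\gamma\}$, the extreme sample $X_{(1)},\dots,X_{(k)}$ with $k = \lfloor n\gamma\rfloor$ consists of (essentially) i.i.d. draws from the conditional law, and the loss $\ell(x,\mathbf{W}) = \tfrac1p(\|x\|_1 - g_{\mathbf{W}}(x))$ evaluated on normalized data takes values in $[0,1]$. I would then invoke Bernstein's / Talagrand's inequality for the supremum of the empirical process: with probability at least $1-\delta/2$,
\[
\sup_{\mathbf{W}}\big\lvert \mathcal{R}(\mathbf{W}) - \widehat{\mathcal{R}}_k(\mathbf{W})\big\rvert
\le 2\,\mathbb{E}\Big[\sup_{\mathbf{W}}\big\lvert \mathcal{R}(\mathbf{W}) - \widehat{\mathcal{R}}_k(\mathbf{W})\big\rvert\Big]
+ \sqrt{\frac{2\sigma^2 \log(4/\delta)}{k}} + \frac{c\log(4/\delta)}{k},
\]
with variance proxy $\sigma^2 \le (1-\gamma)$ coming from the conditioning. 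Here I must also handle the randomness of $k$ itself and of which points are selected as extreme; the cleanest route is to bound the expected supremum by a Rademacher complexity of the class $\{x \mapsto g_{\mathbf{W}}(x) : \mathbf{W} \in \mathcal{A}_p^m\}$, which, being a maximum of $m$ linear functionals with $\ell_1$-bounded coefficient columns evaluated on vectors with $\|\theta\|_\infty \le 1$, has Rademacher complexity of order $1/\sqrt{k}$ (up to the $\sqrt{2(1-\gamma)}$ scaling). Adding the contributions of $(A)$ and $(C)$ (each a one-sided deviation, hence two applications at level $\delta/4$, explaining the $\log(4/\delta)$), together with the constant "$+2$" absorbing lower-order terms from the empirical-vs-population threshold substitution, assembles the stated bound.

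The main obstacle I anticipate is the interplay between the random selection threshold and the i.i.d. structure: the points $X_{(i)}$ are order statistics, not genuinely i.i.d., so one must argue that conditioning on $\|X\|_\infty > t_\gamma$ (a deterministic threshold) makes them i.i.d., and then transfer back to the empirical threshold $\|X_{(k)}\|_\infty$ while controlling the discrepancy — this is precisely where the additive constant "$+2$" and the extra $8\sqrt{2(1-\gamma)\log(4/\delta)}/k$ term originate. A secondary technical point is making the $\mathcal{S}_p^\tau$-approximation bound $2r_\infty^p(\tau)$ rigorous: one needs that for every $\mathbf{W} \in \mathcal{A}_p^m$ there is $\mathbf{W}' \in (\mathcal{S}_p^\tau)^m$ with $\sup_x \lvert g_{\mathbf{W}}(x) - g_{\mathbf{W}'}(x)\rvert / p \le r_\infty^p(\tau)$ on normalized data, which follows from $\|x\|_\infty = 1$ and a coordinatewise comparison of $\mathbf{W}'$ (the columnwise projection) with $\mathbf{W}$, using $\|W'^j\|_\infty \le r_\infty^p(\tau)$ for columns in $\mathcal{S}_p^\tau$. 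Everything else is bookkeeping with the constants to match the displayed coefficients $8$, $16/3$, and $2$.
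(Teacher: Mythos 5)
Your overall strategy does track the paper's (risk decomposition, Bernstein-type concentration with variance proxy $(1-\gamma)$, a separate treatment of the deterministic threshold $t_\gamma$ versus the empirical threshold $\|X_{(k)}\|_\infty$, and a $2r_{\infty}^p(\tau)$ term attributed to the $\mathbb{M}$-set), but two of your steps have genuine gaps. The first is your bound on $(B)$: it rests on the claim that $\mathbf{W}_{mex}$ ``by construction minimizes (up to the projection onto $\mathcal{S}_p^{\tau}$) the empirical risk''. MEXICO is alternating projected gradient ascent on a bilinear, non-concave relaxation, so no global optimality over the constrained set is available, and the paper never uses such a property. Instead it keeps the exact empirical minimizer $\widehat{\mathbf{W}}_k$ in the decomposition (so that $\widehat{\mathcal{R}}_k(\widehat{\mathbf{W}}_k)-\widehat{\mathcal{R}}_k(\mathbf{W}^{\star}_{t_\gamma})\leq 0$ holds by definition of $\widehat{\mathbf{W}}_k$) and bounds the residual optimization error $|\widehat{\mathcal{R}}_k(\mathbf{W}_{mex})-\widehat{\mathcal{R}}_k(\widehat{\mathbf{W}}_k)|$ purely geometrically, via $\|\theta\|_1\leq p$ and the $\ell_\infty$-radius $r_{\infty}^p(\tau)$ of the relevant columns, with no appeal to optimality of the iterate. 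Your projection-of-$\mathbf{W}^{\star}$ argument could be repaired in that spirit (on normalized data the losses of any two mixture matrices differ by at most $1/p\leq r_{\infty}^p(\tau)$), but as written the step ``MEXICO attains the constrained empirical minimum'' is unsupported.

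The second gap is the uniform deviation. You invoke Talagrand plus a Rademacher complexity of $\{x\mapsto g_{\mathbf{W}}(x)\}$ and assert it is of order $1/\sqrt{k}$ ``up to the $\sqrt{2(1-\gamma)}$ scaling''. That is not substantiated: a maximum of $m$ linear functionals with simplex-constrained weights evaluated at points with $\|\theta\|_\infty\leq 1$ carries at least a $\sqrt{\log p}$ factor (and a priori a dependence on $m$), and in any case the expected-supremum term would survive as an extra summand in the final bound; it cannot be absorbed into the displayed constants $8$, $16/3$, $2$. The paper does not go through empirical-process complexities at all: it splits $\mathcal{R}_{t_\gamma}(\mathbf{W})-\widehat{\mathcal{R}}_k(\mathbf{W})=A+B$, where $A$ compares to the surrogate risk $\widetilde{R}_k(\mathbf{W})=k^{-1}\sum_{i}\ell(\theta_i,\mathbf{W})\un_{\{\|X_i\|_\infty\geq t_\gamma\}}$ with the fixed threshold and $B$ is the threshold-substitution term, bounds $|B|$ by the binomial deviation $\bigl|\sum_i\un_{\{\|X_i\|_\infty\geq t_\gamma\}}-n\gamma\bigr|/k+1/k$, and applies a scalar Bernstein inequality to each piece at level $\delta/2$ (the resulting bounds not depending on $\mathbf{W}$, the supremum then being asserted); the constants $8$ and $16/3$ arise from the factor $4$ in the decomposition $\mathcal{R}_{t_\gamma}(\mathbf{W}_{mex})-\mathcal{R}_{t_\gamma}(\mathbf{W}^{\star}_{t_\gamma})\leq 4\sup_{\mathbf{W}}|\mathcal{R}_{t_\gamma}(\mathbf{W})-\widehat{\mathcal{R}}_k(\mathbf{W})|+|\widehat{\mathcal{R}}_k(\mathbf{W}_{mex})-\widehat{\mathcal{R}}_k(\widehat{\mathbf{W}}_k)|$, not from one-sided deviations at level $\delta/4$ as you guess, and the $1/k$ constants come from $k=\lfloor n\gamma\rfloor$ rounding and the $\pm1$ in the indicator count. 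To recover the theorem as stated you would need to replace your Rademacher step by this $A/B$ split (or explicitly carry a complexity term that the stated bound does not contain).
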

The upper bound stated above shows that the convergence rate is of order $O_\mathbb{P}(1/\sqrt{k})$ where $k$ is the actual size of the dataset required to estimate the support of extreme. This convergence rate matches the one of \citet{goix2016sparse}.


\section{Numerical Experiments} \label{sec:exp}
We focus on popular machine learning tasks of \emph{feature clustering} and \emph{anomaly detection} to compare the performance of our algorithm against state-of-the-art methods for extreme events.
Since the margins distributions of real-world data are unknown, the rank transformation as described in Remark \ref{rmk:ParetoStandardization} is considered. 
For ease of reproducibility, the code is available upon request.
\subsection{Feature Clustering}
Consider the feature clustering task where a new extreme sample $X_{\text{new}} \in \rset_+^p$ is to be analyzed. Since $X_{\text{new}}$ is extreme, our goal is to predict the features that are large simultaneously based on the dependence structure clusters, \textit{i.e.} the clusters given by MEXICO. For that matter, one can compute the transformed sample $\widetilde X_{\text{new}}=X_{\text{new}} \mathbf{W}_{mex} $ and assign the predicted cluster of features by $\text{Pred}(X_{\text{new}}) = \argmax_{1 \leq j \leq m}{\widetilde X_{\text{new}}^j}$.

Since MEXICO is an inductive clustering method, we focus on similar clustering algorithms namely spectral clustering \cite{ding2005equivalence} and spherical K-means \cite{janssen2020k}. \citet{janssen2020k} studied spherical K-means algorithm as a solution to perform clustering in extremes. We consider simulated data from an (asymmetric) logistic distribution where the dependence structure of extremes can be specified (see Appendix \ref{logistic_Appendix}). Given the ground truth class  samples, we leverage metrics using conditional entropy analysis: \citet{rosenberg2007v} define the following desirable objectives for any cluster assignment: Homogeneity (H), each cluster contains only members of a single class; Completeness (C), all members of a given class are assigned to the same cluster; v-Measure (v-M): the harmonic mean of Homogeneity and Completeness.

The parameter setting is the following: dimension $p \in \{75, 100, 150, 200\}$, number of train samples $n_{\text{train}} = 1000$ and test samples $n_{\text{test}} = 100$. We use the metrics implemented by \textit{Scikit-Learn} \citep{sklearn2011}. The results, obtained over $100$ independently simulated dataset for each value of $p$, are gathered in Table \ref{tab:tab1}, where the values associated to MEXICO transcribe the best performance between projection method with Dykstra's algorithm and alternating projection. Both methods are detailed in the appendix. For each dimension $p$, bold characters indicate the best method when results are statistically significant using Mann-Whitney and Neyman-Pearson tests.





\begin{table}[h]
\centering
\begin{tabular}{|c|ccc|}
\hline
$p$ & \multicolumn{3}{c|}{Spectral Clustering \citep{ding2005equivalence}}  \\
\hline
& H & C & v-M \\
\hline
75 & 0.925$\pm$ 0.054 & 0.937$\pm$0.040 & 0.931 $\pm$0.046  \\
\hline
100 & 0.918$\pm$0.058 & 0.934$\pm$0.039 & 0.926$\pm$0.048 \\
\hline
150 & 0.889$\pm$ 0.060 & 0.925$\pm$0.031 &  0.906$\pm$ 0.045 \\
\hline
200 & 0.886$\pm$0.047 & 0.928$\pm$0.024 & 0.906$\pm$0.034 \\ 
\hline
\hline
$p$ & \multicolumn{3}{c|}{Spherical-Kmeans \citep{janssen2020k}}  \\
\hline
& H & C & v-M \\
\hline
75 & 0.950$\pm$0.034 & 0.972$\pm$0.024 & 0.961$\pm$0.027  \\
\hline
100 & 0.943$\pm$0.031 & 0.967$\pm$0.024 & 0.955$\pm$0.026 \\
\hline
150 & 0.940$\pm$ 0.026 & 0.962$\pm$0.020 & 0.951$\pm$0.022 \\
\hline
200 & 0.940$\pm$0.018 & 0.962$\pm$0.014 & 0.951$\pm$0.015 \\
\hline
\hline
$p$ & \multicolumn{3}{c|}{MEXICO} \\
\hline
& H & C & v-M \\
\hline
75 & \textbf{0.978}$\pm$0.025 & 0.976$\pm$0.024 & \textbf{0.977}$\pm$0.024 \\
\hline
100 & \textbf{0.978}$\pm$0.020 & \textbf{0.979}$\pm$0.021 & \textbf{0.978}$\pm$0.020 \\
\hline
150 & \textbf{0.976}$\pm$0.015 & \textbf{0.980}$\pm$0.013 & \textbf{0.978}$\pm$0.014 \\
\hline
200 & \textbf{0.970}$\pm$0.015 & \textbf{0.975}$\pm$0.012 & \textbf{0.972}$\pm$0.013 \\
\hline
\end{tabular}
\caption{Comparison of Homogeneity (H), Completeness (C) and v-Measure (v-M) on Simulated Data.}
\label{tab:tab1}
\end{table}

\subsection{Anomaly Detection}

To predict whether a new extreme sample $X_{\text{new}} \in \rset_+^p$ is an anomaly, one may use the value of the loss function $\ell(X_{\text{new}},\mathbf{W}_{\text{mex}})$ as an anomaly score. If it is small then  the dependence structure of $X_{\text{new}}$ is well captured by the mixture $\mathbf{W}_{{mex}}$ and the behavior is rather \emph{normal}. Conversely, a high value means that $X_{\text{new}}$ cannot be approximated by a mixture of $\mathbf{W}_{mex}$ \textit{i.e.} it is more likely to be an outlier. The behavior of the extreme sample $X_{\text{new}}$ can be predicted using any decreasing function of the loss function $\ell$. In the experiment we use the inverse of the loss though one could consider the opposite of the loss as in \cite{goix2016sparse}.

We perform a comparison of three algorithms for anomaly detection in extreme regions: Isolation Forest \citep{liu2008isolation}, DAMEX \citep{goix2017sparse} and our method MEXICO. The algorithms are trained and tested on the same datasets, the test set being restricted to extreme regions. Five reference AD datasets are studied: shuttle, forestcover, http, SF and SA. Table \ref{tab_bonus} in the Appendix provides further dataset details. The experiments are performed in a semi-supervised framework where the training set consists of normal data only. More details about the preprocessing, model tuning and additional results are available in the appendix. The results of means and standard deviations are obtained over $100$ runs and summarized in  Table \ref{tab3}. Better performance are obtained with our anomaly detection approach compared to competing anomaly detection methods.

\begin{table}[h]
\centering
\begin{tabular}{|c|cc|}
\hline
Dataset & ROC-AUC & AP  \\
\hline
& \multicolumn{2}{c|}{iForest \citep{liu2008isolation}} \\
\hline
SA & 0.886$\pm$0.032 & 0.879$\pm$0.031  \\
\hline
SF & 0.381$\pm$0.086 & 0.393$\pm$0.081  \\
\hline
http & 0.656$\pm$0.094 & 0.658$\pm$0.099 \\
\hline
shuttle & 0.970$\pm$0.020 & 0.826$\pm$0.055 \\
\hline
forestcover & 0.654$\pm$0.096 & 0.894$\pm$0.037 \\
\hline
& \multicolumn{2}{c|}{DAMEX \citep{goix2016sparse}} \\
\hline
SA & 0.982$\pm$0.002 & 0.938$\pm$0.012\\
\hline
SF & 0.710$\pm$0.031 & 0.650$\pm$0.034 \\
\hline
http & 0.996$\pm$0.002 & 0.968$\pm$0.009 \\
\hline
shuttle & 0.990$\pm$0.003 & 0.864$\pm$0.026 \\
\hline
forestcover & 0.762$\pm$0.008 & 0.893$\pm$0.010  \\
\hline
 &  \multicolumn{2}{c|}{MEXICO} \\
\hline
SA & 0.983$\pm$0.031 & \textbf{0.950}$\pm$0.011 \\
\hline
SF & \textbf{0.892}$\pm$0.013 & \textbf{0.812}$\pm$0.016 \\
\hline
http & 0.997$\pm$0.002 & \textbf{0.972}$\pm$0.012 \\
\hline
shuttle & 0.990$\pm$0.003 & 0.864$\pm$0.037 \\
\hline
forestcover & \textbf{0.863}$\pm$0.015 & \textbf{0.958}$\pm$0.006 \\
\hline
\end{tabular}
\caption{Comparison of Area Under Curve of Receiver Operating Characteristic (ROC-AUC) and Average Precision (AP).}
\label{tab3}
\end{table}
\section{Conclusion}\label{sec:conclusion}
Understanding the impact of shocks, \textit{i.e.}, extremely large input values on systems is of critical importance in diverse fields ranging from security or finance to environmental sciences and epidemiology. 
In this paper, we have developed a a rigorous methodological framework for clustering features in extreme regions, relying on the non-parametric theory of regularly varying random vectors. We illustrated our algorithm performance for both feature clustering and anomaly detection on simulated and real data. Our approach does not scan all the multiple possible subsets and outperforms existing algorithms. Future work will focus on the statistical properties 
of the developed algorithm by further exploring links with kernel methods.
\newpage

\bibliographystyle{icml2021}
\bibliography{main_arXiv.bbl}
\newpage
\appendix
\onecolumn

\begin{center}
{\large {\bf\textsc{Appendix: \\ Feature Clustering for Support Identification in Extreme Regions}}}
\end{center}

Section \ref{sec:proofs} gathers the proofs of theorems and propositions. Section \ref{sec:structure} details the probabilistic framework of EVT through Pareto standardization and logistic distributions. Section \ref{sec:self_sup} highlights links with self-supervised methods related to MEXICO. Section \ref{sec:details} is dedicated to numerical experiments: the preprocessing of the data, models selection and additional results. Further numerical experiments and variants are gathered in Section \ref{sec:Appendix_exp}.

\section{Proofs of Theorems \& Propositions} \label{sec:proofs}

\subsection{Proof of Proposition \ref{th0}}
\begin{proof} Proposition \ref{th0} can be derived from \citet{Engelke}[Theorem 2] and we provide another proof for the sake of clarity. $X$ is a standard regularly varing where with margin tails equivalent to Pareto distribution. Using the characterization of \citet{basrak2002characterization}, we have the following equivalence between the behavior of the vector and its components
\begin{align*}
\left(X \text{ is regularly varying}\right) \iff \left(\forall u \in \rset^p, \langle u, X \rangle \text{ is univariate regularly varying} \right)
\end{align*}
Each column $\widetilde{X}^{j}$ of $\widetilde X = X W$ is given by the linear combination $\widetilde{X}^{j} = \sum_{k = 1}^{p} X^{k} W_{k}^{j}$. Therefore, any linear combination of the form $\langle \widetilde u, \widetilde X \rangle$ with $\widetilde u \in \rset^m$ is actually a linear combination of the form $\langle u, X \rangle$. Indeed, we have for $\widetilde u \in \rset^m$,
\begin{align*}
\langle \widetilde u, \widetilde X \rangle = \sum_{j=1}^m \widetilde u_j \widetilde X^j = \sum_{j=1}^m \widetilde u_j \left(\sum_{k = 1}^{p} X^{k} W_{k}^{j} \right) = \sum_{k = 1}^{p} \left(\sum_{j=1}^m \widetilde u_j W_{k}^{j}\right) X^k.
\end{align*}
Since $X$ is regularly varying then any linear combination of the form $\langle \widetilde u, \widetilde X \rangle$ is univariate regularly varying, which exactly means, using the equivalence, that $\widetilde X$ is a regularly varying random vector. To find the tail index of the transformed vector, we rely on the following Lemma.

\begin{lemma} \label{lemma:pareto}
Let $X \in \rset_{+}^p$ be a regularly varying random vector with tail index $1$ and $\mathbf{W} \in \mathcal{A}_p^m$ a mixture matrix. Then the transformed vector $\widetilde{X} = X\mathbf{W}$ is regularly varying with tail index $\alpha = 1$.
\end{lemma}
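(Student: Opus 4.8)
The plan is to reduce the claim to the scaling behaviour of the radial part. By the previous paragraph in the proof of Proposition \ref{th0} we already know that $\widetilde X = X\mathbf W$ is regularly varying (any linear form $\langle \widetilde u, \widetilde X\rangle$ is a linear form $\langle u, X\rangle$, hence univariate regularly varying); what remains is only to pin down the index $\alpha$, which is the same $\alpha$ for the vector as for any nondegenerate one-dimensional projection. So first I would fix a convenient projection — for instance $\widetilde u = e^j$ for some column $j$ with $\mathbf W^j \neq 0$, giving the nonnegative random variable $\widetilde X^j = \sum_{k=1}^p W_k^j X^k$. It suffices to show this scalar variable has tail index $1$.

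Next I would invoke the standard fact (see e.g.\ \citet{Resnick1987}, or the multivariate regular variation definition \ref{def:multivariateRegularVariation} with $g(t)=1/t$) that if $X\in\rset_+^p$ is regularly varying with tail index $1$, then for any fixed nonnegative, nonzero vector $w=(w_1,\dots,w_p)$ the linear combination $\langle w, X\rangle = \sum_k w_k X^k$ is regularly varying with the same index $1$, i.e.\ $t\,\PP{\langle w,X\rangle > t}$ converges to a finite nonzero limit of the form $\mu(\{x\in E : \langle w,x\rangle > 1\})$. This is exactly the content of the homogeneity of the exponent measure $\mu$: writing $A_w = \{x : \langle w, x\rangle > 1\}$, which is bounded away from $0$ and satisfies $\mu(\partial A_w)=0$ for a.e.\ scaling, Definition \ref{def:multivariateRegularVariation} gives $t\,\PP{t^{-1}X\in A_w}\to \mu(A_w)$, and $\mu(t A_w)=t^{-1}\mu(A_w)$. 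One must check $\mu(A_w)\in(0,\infty)$: finiteness because $A_w$ is bounded away from the origin, and positivity because $w$ is nonzero with nonnegative entries so $A_w$ contains a neighbourhood of a ray on which $\mu$ charges mass (here one uses that $\mu$ is nonzero and, being the limit exponent measure of a vector with all marginals Pareto-tailed with index $1$, puts mass in every "large-$j$-th-coordinate" cone). Applying this with $w = \mathbf W^j$ yields that $\widetilde X^j$ is regularly varying with tail index $1$.

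Finally I would conclude: since $\widetilde X$ is already known to be regularly varying, and a regularly varying random vector has a well-defined single tail index which coincides with the tail index of any of its nondegenerate coordinate projections, the computation above forces the tail index of $\widetilde X$ to be $\alpha = 1$. The main obstacle — and the only point needing care — is the positivity $\mu(A_w) > 0$, i.e.\ ruling out that the mixing could annihilate the tail; this is handled by the nonnegativity of the entries of $\mathbf W^j$ (no cancellation can occur in $\sum_k W_k^j X^k$) together with the assumption that the marginals of $X$ are tail-equivalent to a standard Pareto law, which guarantees $\mu$ charges the relevant half-space. The rest is bookkeeping with the homogeneity relation $\mu(t\,\cdot)=t^{-1}\mu(\cdot)$.
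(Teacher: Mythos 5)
Your proposal is correct and takes essentially the same route as the paper: both reduce the claim to showing that each nonnegative linear combination $\langle W^j, X\rangle$ is univariate regularly varying with index $1$, the only delicate point being the positivity $\mu(\{x:\langle W^j,x\rangle>1\})>0$, which you obtain from the nonnegativity of the simplex weights together with the Pareto-standardized marginals, and which the paper obtains by the equivalent explicit bound $\langle W^j,X\rangle \geq \epsilon X_{i^*}$ with $\epsilon=\max_i W_i^j>0$. The remaining bookkeeping (homogeneity of $\mu$, identifying the index of $\widetilde X$ from such nondegenerate projections) coincides with the paper's argument.
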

\begin{proof}
Following Lemma 3.9 from \citet{jessen2006regularly}, let $\mathcal{A}_{W^j}$ denote the set $\{x, \langle W^j, x\rangle > 1\}$ where $W^j$ is the $j$-th column of $\mathbf{W}$, we want to show that $\mu(\mathcal{A}_{W^j}) > 0$.
Let $\epsilon = \max_{1\leq i \leq  p}W_i^j$. It follows that $\epsilon > 0$ otherwise $W^j = 0$ and it would not belong to $\Delta_p$. Let $i^* = \arg \max_{i\leq p} W_i^j$ \textit{i.e.} $\epsilon =  W_{i^*}^j$. As $ W_{i^*}^j$ and $V_{i^*}$ are positive, we have $\langle W^j, X\rangle \geq W_{i^*}^j X_{i^*} = \epsilon X_{i^*}.$ Therefore, $\{\epsilon X_{i^*} \geq t\} \subset \{\langle W^j, X \rangle \geq t\}$ and $t \PP{ \epsilon X_{i^*} \geq t} \leq t \PP{\langle W^j, X\rangle \geq t}$. By taking the limit on both sides of the equation, we obtain $0 < \epsilon \leq \mu(\mathcal{A}_{W^j})$ and conclude that each marginal of the transformed vector $\widetilde{X} = X\mathbf{W}$ is regularly varying with tail index $\alpha = 1$ thus $\widetilde X$ is regularly varying with tail index $1$. \qed

\end{proof}
Since the random vectors $X$ and $\widetilde X$ are regularly varying, we have the existence of nonzero Radon measures $\mu$ and $\widetilde \mu$ that are independent of the considered norm. Moreover, in virtue of Lemma \ref{lemma:pareto}, $\widetilde X$ is regularly varying with tail index $\alpha  = 1$, and considering the complementary of the unit sphere, defined by $\Omega_{m, ||.||_p}^c = \{x \in \rset_{+}^m,\|x\|_p > 1\}$. We have by definition
\begin{align*}
\widetilde {\mu}(\Omega_{m, ||.||_1}^c) =  \lim_{t \to \infty} t \PP{t^{-1}\widetilde V \in \Omega_{m,  ||.||_1}^c} = \lim_{t \to \infty} t \PP{||\widetilde V||_1 > t}.
\end{align*}
Using that $(1/m)||\widetilde V||_1 \leq ||\widetilde V||_\infty = \max_{1 \leq j \leq m} \widetilde V_j = \max_{1 \leq j \leq m} \left(\sum_{k = 1}^p V^k W_k^j \right)$ and $W_k^j \in [0,1]$, we have
\begin{align*}
\PP{||\widetilde V||_\infty > t} = \PP{\max_{1 \leq j \leq m} \widetilde V_j > t} \leq \PP{\sum_{k = 1}^p V^k > t}.
\end{align*}
We recognize the $\ell_1$-norm of the random vector $V \in \rset^p_+$ and obtain
\begin{align*}
\forall t>1, \quad t \PP{||\widetilde V||_1 > tm} \leq t \PP{||\widetilde V||_\infty > t}  \leq  t \PP{||V||_1 > t}.
\end{align*}
Taking the limit $t \to \infty$ on both sides provides the desired result $(1/m)\widetilde {\mu}(\Omega_{m, ||.||_1}^c) \leq \widetilde {\mu}(\Omega_m^c) \leq \mu(\Omega_p^c)$. \qed
\end{proof}

\subsection{Proof of Proposition \ref{prop:contractant}}
Let $x_1, x_2$  be two extreme samples 
and $W^\star$ being a minimizer of $\mathcal{R}_{t}$, for the sake of simplicity we consider that $W^{\star^j}$ represents the cluster $K_j$ for $j \leq m$. Let $\tilde x_i = x_i W^\star \in \rset^m_+$ for $i \in \{1,2\}$. By definition,
\begin{align*}
  \| \tilde x_1 - \tilde x_2 \|_2^2
  = \|(x_1 - x_2)W^\star \|_2^2
  = \sum_{l = 1}^m \bigg(\big|\sum_{i=1}^p (x_1^i - x_2^i ) W_i^{\star^l} \big| \bigg)^2
  = \sum_{l = 1}^m \frac{1}{|K_l|^2} \left|\sum_{i\in K_l}(x_1^i - x_2^i)\right|^2
\end{align*}
which gives the upper bound
\begin{align*}
 \| \tilde x_1 - \tilde x_2 \|_2^2
 \leq \sum_{l = 1}^m \frac{1}{|K_l|^2} \sum_{i = 1}^p|x_1^i - x_2^i|^2|K_l|
  = \sum_{l = 1}^m \frac{1}{|K_l|} \sum_{i = 1}^p|x_1^i - x_2^i|^2
  \leq \frac{m}{|K_l|} \sum_{i = 1}^p|x_1^i - x_2^i|^2.
\end{align*}
As $\forall l \leq m, 2 \leq |K_l| \leq p$, $\theta \mapsto \theta W^\star$ is $(m/2)$-lipschitz continuous. Given the information that $x_1$ and $x_2$ belong to the same cluster $K_l$ the conclusion follows the same steps when considering solely the $l$-th column of $W^\star$ representing the cluster $K_l$.

\subsection{Proof of Theorem \ref{thm:thm2}}
First, recall the hypervolume of the $p$-simplex with side length $a$ and the hypervolume of the Euclidian ball of radius $R$ in dimension $p$: $\mathcal{V}ol(\Delta_p,a) = (\sqrt{p}/(p-1)!) \left(a/\sqrt{2}\right)^{p-1}$ and $\mathcal{V}ol\left(B_{2,p}(0,R)\right) = \pi^{p/2} R^p /\Gamma\left(\frac{p}{2} + 1\right).$

\textbf{Probability simplex $\Delta_p$.} The probability simplex we consider has a side length of $a=\sqrt{2}$ which gives the value of $\mathcal{V}ol(\Delta_p)$. \\
\textbf{$\ell_1$-incircle.} Regarding the $\ell_1$-ball, it is the scaled simplex whose side length is given by the distance between two face centers of $\Delta_p$. This length is equal to $\sqrt{2}/(p-1)$ and we deduce the volume $\mathcal{V}ol(\mathcal{S}_p^{\ell_1})$. \\
\textbf{$\ell_2$-incircle.} For the $\ell_2$-ball, denote $\mathcal{B} = (e_1,\ldots,e_p)$ the canonical basis and let $x \in \mathcal{S}_p^{\ell_2}, x = \sum_{i=1}^p \langle x,e_i \rangle e_i = \sum_{i=1}^p x_i e_i $. The vector $e_p^{'} = \sqrt{p} \bar x = (1/\sqrt{p},\ldots,1/\sqrt{p})$ is unitary and orthogonal to the simplex $\Delta_p$ with $\Delta_p \subset Span(e_p^{'})^\perp$. We have $\langle x, e_p^{'} \rangle = 0$ and we can complete the vector $e_p^{'}$ into an orthonormal basis $\mathcal{B'} = (e_1^{'},\ldots,e_p^{'})$ with $P = \mathcal{P}_{\mathcal{B},\mathcal{B'}}$ and $x = \sum_{i=1}^p \langle x, e_i \rangle e_i = \sum_{i=1}^{p-1} \langle x, e_i^{'} \rangle e_i^{'}.$ The hypervolume is invariant by translation so we make the projection of $\mathcal{S}_p^{\ell_2}$ onto $\rset^{p-1}$ to see that $\mathcal{V}ol(\mathcal{S}_p^{\ell_2}) = \mathcal{V}ol\left(B_{2,p-1}(0,r_p)\right)$ with $r_p = 1/\sqrt{p(p-1)}$ the radius of the $\ell_2$ inscribed ball of $\Delta_p$. This gives the value of $\mathcal{V}ol(\mathcal{S}_p^{\ell_2})$. \\
\textbf{$\mathbb{M}$-set.} Finally for the $\mathbb{M}$-set, we cut off with a threshold $\tau$ the length $L = \sqrt{(p-1)/p}$ between the barycenter $\bar x$ and a vertex $e_i$. We get $p$ smaller simplices and the volume we want is nothing but the difference between the volume of the simplex $\Delta_p$ and $p$ times the volume of a small simplex. To compute the hypervolume of one small simplex, we need to find its side length, knowing that its height is $(1-\tau)L$. We find a side length equal to $\sqrt{2}(1-\tau)(p-1)/p$ and can conclude for the value $\mathcal{V}ol(\mathcal{S}_p^{\tau})$.
\qed

We present in Figure \ref{fig3} the evolution of the ratio $\rho(\mathcal{S}^\tau)$ of the $\mathbb{M}$-set for different values of threshold $\tau$ and dimension $p$.

\begin{figure}[H]
  \centering
  \includegraphics[width=0.36\textwidth]{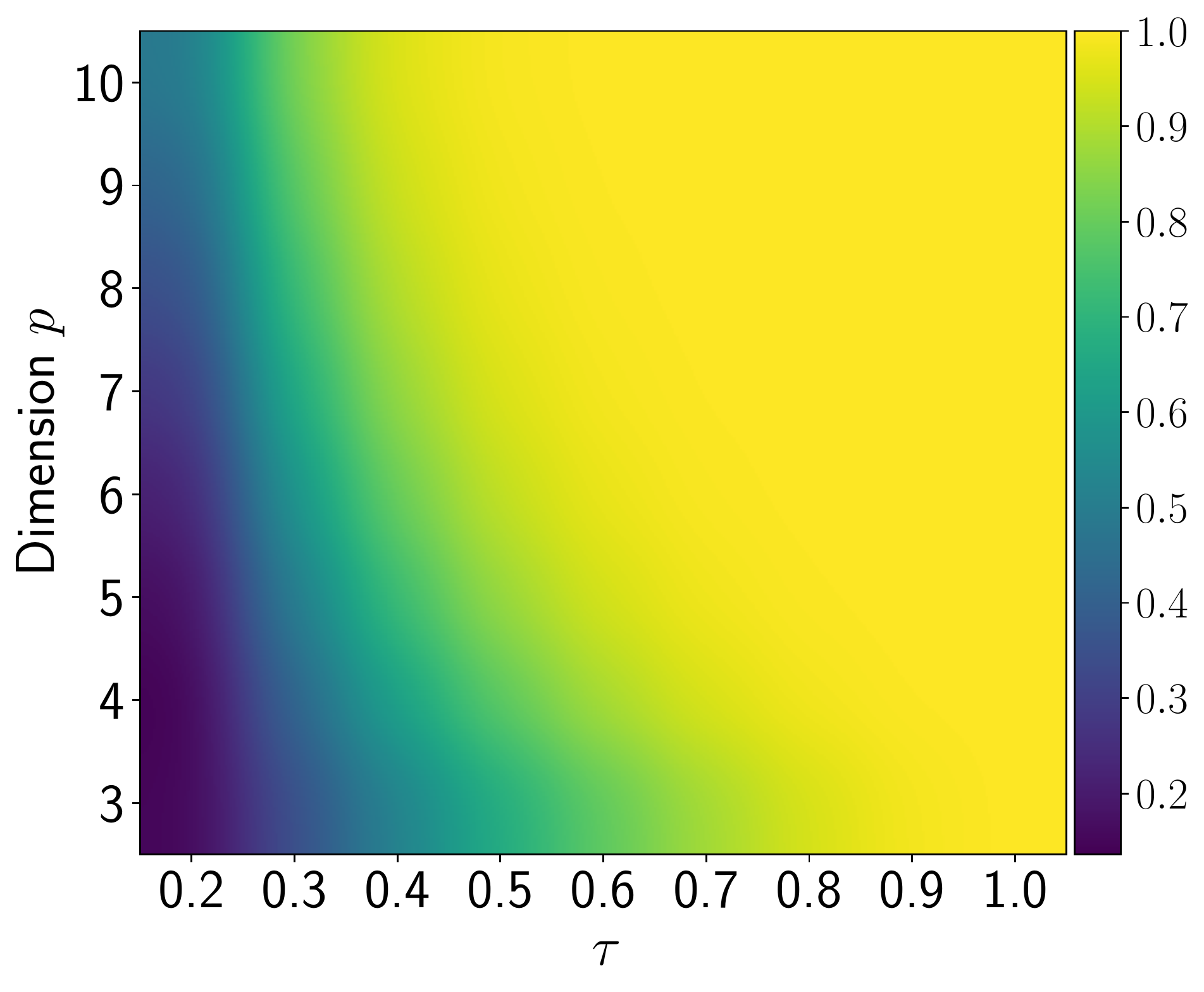}
  \caption{Evolution of $\rho(\mathcal{S}^\tau)$ with varying values of ($\tau,p$).}
  \label{fig3}
\end{figure}

\subsection{Proof of Theorem \ref{th3}}

  As stated in the introduction and in the preliminaries from Section~\ref{sec:EVT}, one can work with the angular measure $\Phi$ or the exponent measure $\mu$ in an \emph{equivalent} way when studying the limit distribution of extremes. In this proof, we focus on the angular components $\theta_i$ induced by the normalization $\theta_i = \Theta(X_i) = X_i / \|X_i\|_\infty$ so that the loss function is bounded in $[0,1]$. The proof relies on classical risk decompositions involving generalization and optimization errors.  We have
  \begin{align*}
&\mathcal{R}_{t_\gamma}(\mathbf{W}_{mex})-\mathcal{R}_{t_\gamma}(\mathbf{W}_{t_\gamma}^\star)  =
\underbrace{\left[\mathcal{R}_{t_\gamma}(\mathbf{W}_{mex}) - \widehat{\mathcal{R}}_k(\mathbf{W}_{mex}) \right]}_{generalization} +
\underbrace{\left[ \widehat{\mathcal{R}}_k(\mathbf{W}_{mex}) - \widehat{\mathcal{R}}_k(\widehat{\mathbf{W}_k}) \right]}_{optimization} +
\underbrace{\left[ \widehat{\mathcal{R}}_k(\widehat{\mathbf{W}_k}) - \mathcal{R}_{t_\gamma}(\widehat{\mathbf{W}_k})\right]}_{generalization} + \\
&\underbrace{\left[\mathcal{R}_{t_\gamma}(\widehat{\mathbf{W}_k}) -  \widehat{\mathcal{R}}_k(\widehat{\mathbf{W}_k}) \right]}_{generalization} +
\underbrace{\left[\widehat{\mathcal{R}}_k(\widehat{\mathbf{W}_k})- \widehat{\mathcal{R}}_k(\mathbf{W}_{t_\gamma}^\star) \right]}_{\leq 0} +
\underbrace{\left[\widehat{\mathcal{R}}_k(\mathbf{W}_{t_\gamma}^\star) - \mathcal{R}_{t_\gamma}(\mathbf{W}_{t_\gamma}^\star) \right]}_{generalization}.
\end{align*}
Therefore,
  \begin{align*}
\left|\mathcal{R}_{t_\gamma}(\mathbf{W}_{mex})-\mathcal{R}_{t_\gamma}(\mathbf{W}_{t_\gamma}^\star)\right|
&\leq \left| \widehat{\mathcal{R}}_k(\mathbf{W}_{mex}) - \widehat{\mathcal{R}}_k(\widehat{\mathbf{W}_k}) \right| +
\left|\mathcal{R}_{t_\gamma}(\mathbf{W}_{mex}) - \widehat{\mathcal{R}}_k(\mathbf{W}_{mex}) \right| + \\
&\left| \mathcal{R}_{t_\gamma}(\widehat{\mathbf{W}_k}) - \widehat{\mathcal{R}}_k(\widehat{\mathbf{W}_k})  \right| +
\left[\mathcal{R}_{t_\gamma}(\widehat{\mathbf{W}_k}) -  \widehat{\mathcal{R}}_k(\widehat{\mathbf{W}_k}) \right] +
\left| \mathcal{R}_{t_\gamma}(\mathbf{W}_{t_\gamma}^\star) - \widehat{\mathcal{R}}_k(\mathbf{W}_{t_\gamma}^\star)  \right|,
\end{align*}
Thus by taking the supremum over $\mathbf{W}$, on the latter term we finally obtain
    \begin{align}
    \label{eq:risk_decomposition}
    \mathcal{R}_{t_\gamma}(\mathbf{W}_{mex}) - \mathcal{R}_{t_\gamma}(\mathbf{W}^{\star}_{t_\gamma}) \leq 4 \sup_{\mathbf{W}}| \mathcal{R}_{t_\gamma}(\mathbf{W}) - \widehat{\mathcal{R}}_k(\mathbf{W}) | + | \widehat{\mathcal{R}}_k(\mathbf{W}_{mex}) - \widehat{\mathcal{R}}_k(\mathbf{\widehat{W}}_k) |
  \end{align}
The right-hand side of the Eq. \eqref{eq:risk_decomposition} is composed of two terms. The former, known as the generalization error, measures the gap between the true risk and its empirical counterpart whereas the latter is the optimization error between the solution $\mathbf{W}_{mex}$ found by MEXICO and the minimizer $\widehat{\mathbf{W}}_k$ of the empirical risk $\widehat{\mathcal{R}}_k$. The remainder of this proof relies on the following steps. We first provide a bound on the optimization error (Step 1) and then we upper-bound the generalization error (Step 2). This last term involves two quantities which are treated separately (Steps 2.1 and 2.2) on events with probability at least $(1-\delta/2)$. Collecting these two bounds and invoking the union bound concludes the proof (Step 3).

\textbf{Step 1 - Optimization error $|\widehat{\mathcal{R}}_k(\mathbf{W}_{mex}) - \widehat{\mathcal{R}}_k(\widehat{\mathbf{W}}_k) |$.}
Up to rescaling the $\mathbb{M}$-set, the optimization error is bounded as follows:
\begin{align*}
|\widehat{\mathcal{R}}_k(\mathbf{W}_{mex}) - \widehat{\mathcal{R}}_k(\widehat{\mathbf{W}}_k)| \leq 2 r_{\infty}^p(\tau).
\end{align*}
For the sake of simplicity, assume that the columns of $\mathbf{W}_{mex}$ correspond to the columns of $\widehat{\mathbf{W}}_k$ for any cluster $K_j$ with $j\leq m$. Up to permutation of the columns, the former assumption may be withdrawn. We have
\begin{align*}
    |\widehat{\mathcal{R}}_k(\mathbf{W}_{mex}) - \widehat{\mathcal{R}}_k(\widehat{\mathbf{W}}_k)|
     \leq \frac{1}{kp} \sum_{i=1}^k ||\theta_{(i)}||_1 ||(\mathbf{W}_{mex} - \widehat{\mathbf{W}}_k)^{\phi[(i)]}||_\infty
    \leq  \frac{1}{k}\sum_{i=1}^k ||(\mathbf{W}_{mex} - \widehat{\mathbf{W}}_k)^{\phi[(i)]}||_\infty
    \leq  2 r_{\infty}^p(\tau),
\end{align*}
where we used that both mixture matrices belong to the $\mathbb{M}$-set $\mathcal{S}_p^{\tau}$.

\textbf{Step 2 - Generalization error $\sup_{W}| \mathcal{R}_{t_\gamma}(\mathbf{W}) - \widehat{\mathcal{R}}_k(\mathbf{W}) |$.}
Recall that $k = \lfloor n \gamma \rfloor$ along with the formula of the empirical risk $\widehat{\mathcal{R}}_k(\mathbf{W})$ and consider the surrogate empirical risk $\widetilde{R}_k(\mathbf{W})$ defined by:
    $$\widetilde{R}_k(\mathbf{W})= \frac{1}{k} \sum_{i=1}^k \ell(X_{(i)},\mathbf{W}), \qquad \widetilde{R}_k(\mathbf{W}) \defeq \frac{1}{k}\sum_{i=1}^n \ell(\theta_i,\mathbf{W})\un_{\{||X_i||_\infty \geq t_\gamma\}}$$
The generalization error may decomposed into
$$\mathcal{R}_{t_\gamma}(\mathbf{W}) - \widehat{\mathcal{R}}_k(\mathbf{W}) = \underbrace{\left[\mathcal{R}_{t_\gamma}(\mathbf{W}) - \widetilde{R}_k(\mathbf{W})\right]}_{A} + \underbrace{\left[ \widetilde{R}_k(\mathbf{W}) - \widehat{R}_k(\mathbf{W})\right]}_{B} $$


\textbf{Step 2.1 - Bound on A.}
The first term is bounded as follows
\begin{align*}
A
&= \mathcal{R}_{t_\gamma}(\mathbf{W})- \widetilde{R}_k(\mathbf{W}) \\
&= \expec\left[\ell(\Theta(X), W) \mid ||X||_\infty  \geq t_\gamma \right] - \frac{1}{k}\sum_{i=1}^n \ell(\theta_i,\mathbf{W})\un_{\{||X_i||_\infty \geq t_\gamma\}} \\
&= \frac{1}{\gamma}\expec\left[\ell(\Theta(X), W) \un_{\{||X||_\infty  \geq t_\gamma\}}\right] -  \frac{1}{k}\sum_{i=1}^n \ell(\theta_i,\mathbf{W})\un_{\{||X_i||_\infty \geq t_\gamma\}}  \\
&= \left(\frac{1}{\gamma}-\frac{n}{k}\right)\expec\left[\ell(\Theta(X), W) \un_{\{||X||_\infty  \geq t_\gamma\}}\right] - \frac{1}{k}\sum_{i=1}^n \left( \ell(\theta_i,\mathbf{W})\un_{\{||X_i||_\infty \geq t_\gamma\}} - \expec\left[\ell(\Theta(X), W) \un_{\{||X||_\infty  \geq t_\gamma\}}\right] \right)
\end{align*}

Using that the loss $\ell(\cdot)$ is upper bounded by $1$ and $\expec\left[\un_{\{||X||_\infty  \geq t_\gamma\}}\right] = \mathbb{P}(||X||_\infty  \geq t_\gamma) = \gamma$, we can bound the first term as
\begin{align*}
    \left|\left(\frac{1}{\gamma}-\frac{n}{k}\right)\expec\left[\ell(\Theta(X), W) \un_{\{||X||_\infty  \geq t_\gamma\}}\right]\right|
    \leq \left|\left(\frac{1}{\gamma}-\frac{n}{k}\right) \gamma \right|
    \leq  \frac{|k - n\gamma|}{k}
    \leq \frac{1}{k},
\end{align*}
Therefore we have
\begin{align*}
|A| = \left|\mathcal{R}_{t_\gamma}(\mathbf{W})- \widetilde{R}_k(\mathbf{W}) \right| \leq \frac{1}{k} + \frac{1}{k} \left| \sum_{i=1}^n \left( \ell(\theta_i,\mathbf{W})\un_{\{||X_i||_\infty \geq t_\gamma\}} - \expec\left[\ell(\Theta(X), W) \un_{\{||X||_\infty  \geq t_\gamma\}}\right] \right)\right|.
\end{align*}
We shall treat the last term with Bernstein inequality. Denote by $S_n = \sum_{i=1}^n U_i$ with $U_i = \ell(\theta_i,\mathbf{W})\un_{\{||X_i||_\infty \geq t_\gamma\}}$ and note that $\mathbb{E}[U_i] = \expec\left[\ell(\Theta(X), W) \un_{\{||X|| \geq t_\gamma\}}\right]$, $E_n = \sum_{i=1}^n \mathbb{E}[U_i]$, $V_n = \sum_{i=1}^n Var(U_i) \leq n \gamma (1-\gamma)$.

 Bernstein inequality implies, for $y >0$,
  $\PP{ |S_n - E_n| /k > y } \le 2\exp\{-( y^2k^2/2 ) / (n
  \gamma(1-\gamma) + y k/3) \} :=\delta$.  Solving the latter bound for
  $y$ yields
  $ y = \frac{1}{k} \left( \frac{1}{3}\log(2/\delta) + \sqrt{(1/3 \log
      (2/\delta))^2 + 2 n\gamma(1- \gamma) \log(2/\delta)  }\right) $. Simplifying the latter bound using that for $a,b>0$, $\sqrt{a+b} \le \sqrt a + \sqrt b$, and that $n\gamma \le k+1$, we obtain that  with probability
  $1-\delta$,
  \begin{equation}\label{eq:boundA}
    |A| \le  \sqrt{ \frac{2}{k} (1-\gamma) \log(2/\delta)} + \frac{1}{k}\left(
    \frac{2}{3} \log (2/\delta) +  \sqrt{ 2(1-\gamma) \log(2/\delta)}  +1 \right)
  \end{equation}

\textbf{Step 2.2 - Bound on B.}
The second term is bounded as follows
\begin{align*}
B &=
\widetilde{R}_k(\mathbf{W}) - \widehat{R}_k(\mathbf{W}) \\
&= \frac{1}{k}\sum_{i=1}^n \ell(\theta_i,\mathbf{W})\un_{\{\|X_i\|_\infty \geq t_\gamma\}} - \frac{1}{k}\sum_{i=1}^n \ell(\theta_i,\mathbf{W})\un_{\{\|X_i\|_\infty \geq \|X_{(k)}\|_\infty\}} \\
&= \frac{1}{k}\sum_{i=1}^n \ell(\theta_i,\mathbf{W}) \left[\un_{\{\|X_i\|_\infty \geq t_\gamma\}} - \un_{\{\|X_i\|_\infty \geq \|X_{(k)}\|_\infty\}} \right]
\end{align*}
Again, using that the loss $\ell(\cdot)$ is upper bounded by $1$ and by triangle inequality we get
\begin{align*}
|B|
&\leq \frac{1}{k}\sum_{i=1}^n  \left|\un_{\{\|X_i\|_\infty \geq t_\gamma\}} - \un_{\{\|X_i\|_\infty \geq \|X_{(k)}\|_\infty\}} \right|
\end{align*}
To analyze this term, we shall consider whether $\|X_{(k)}\|_\infty  \geq t_\gamma$ or not. We have
\begin{align*}
\sum_{i=1}^n  \left|\un_{\{\|X_i\|_\infty \geq t_\gamma\}} - \un_{\{\|X_i\|_\infty \geq \|X_{(k)}\|_\infty\}} \right|
& =
      \begin{cases}
        \sum _{i=k+1}^n \un_{\{ \|X_{(i)}\|_\infty \geq t_\gamma \}}  & \text{ if } \|X_{(k)}\|_\infty \geq t_\gamma \\
         \sum _{i=1}^k  \un_{\{ \|X_{(i)}\|_\infty <  t_\gamma \}} & \text{ otherwise }
      \end{cases} \nonumber \\
    &=
      \begin{cases}
         \sum _{i=1}^n \un_{\{ \|X_{(i)}\|_\infty \geq t_\gamma \}} - \frac{k }{k}  & \text{ if } \|X_{(k)}\|_\infty \geq t_\gamma \\
         \sum _{i=1}^n  \un_{\{ \|X_{(i)}\|_\infty <  t_\gamma \}} - \frac{n-k}{k} & \text{ otherwise }
      \end{cases} \nonumber \\
      &  = \left| \sum_{i=1}^n \un_{\{\| X_i\|_\infty  \geq t_\gamma  \}} - 1\right|\nonumber
\end{align*}
Therefore
\begin{align*}
|B| \leq \frac{1}{k} \left| \sum_{i=1}^n \un_{\{\| X_i\|_\infty  \geq t_\gamma  \}} - 1\right|
\end{align*}
Denote by $S_n = \sum_{i=1}^n Z_i$ with $Z_i=\un_{\{\| X_i\|_\infty  \geq t_\gamma  \}}$ and observe that $\mathbb{E}[S_n] = n\gamma$. We finally have
\begin{align*}
    |B| \leq \frac{\left| S_n - n\gamma\right| }{k}+ \frac{1}{k}
\end{align*}
 Similarly to the bound on A, we obtain that  with probability
  $1-\delta$,
  \begin{equation}\label{eq:boundB}
    B \le  \sqrt{ \frac{2}{k} (1-\gamma) \log(2/\delta)} + \frac{1}{k}\left(
    \frac{2}{3} \log (2/\delta) +  \sqrt{ 2(1-\gamma) \log(2/\delta)}  +1 \right)
  \end{equation}

\textbf{Step 3 - Final upper-bound.}
Using the union bound and combining the inequalities \eqref{eq:boundA} and \eqref{eq:boundB} from Steps 2.1 and 2.2, we obtain that with probability at least $(1-\delta)$,
  \begin{align*}
\sup_{W}| \mathcal{R}_{t_\gamma}(\mathbf{W}) - \widehat{\mathcal{R}}_k(\mathbf{W}) | \leq \frac{1}{\sqrt{k}} 2\sqrt{2(1-\gamma)\log(4/\delta)}  + \frac{1}{k} \left(
    \frac{4}{3} \log (4/\delta) +  2 \sqrt{ 2(1-\gamma) \log(4/\delta)}  + 2  \right),
  \end{align*}
and plug the bound on the optimization error (Step 1) into the decompostion \eqref{eq:risk_decomposition} to conclude the proof. \qed
\section{Probabilistic Framework \& Dependence of Extremes}
\label{sec:structure}

\subsection{On the Pareto Standardization $T$ and its Empirical Counterpart $\widehat T$}
\label{appendix:Pareto_scaling}
As the components of a random vector are not necessarily on the same scale, componentwise standardisation is a natural and necessary preliminary step. The Pareto standardization $T$ (and its empirical counterpart $\widehat T$) is mentionned in Algorithm \ref{alg:Mexico}. Following common practice in multivariate extreme value analysis \cite{beirlant2006statistics}, the input data $(X_i)_{i \in \{1,\ldots, n\} }$ is standardised by applying the rank-transformation: 
$$\widehat{T}(x) =\bigg(1 / \Big(1 - \widehat{F}_j(x) \Big) \bigg)_{j=1, \ldots, d} $$ for all $x = (x^{1}, \ldots, x^{p}) \in \rset^p$ where $\widehat{F}_j (x) \defeq \frac{1}{n+1}\sum_{i=1}^n \mathds{1}\{X_i^j \leq x\}$ is the $j^{th}$ empirical marginal distribution.
Denoting by $\widehat V_i$ the standardized variables, $\forall i \in \{1, \ldots, n \}, \widehat V_i = \widehat{T}(X_i)$.  The marginal distributions of $\widehat V_i$ are well approximated by  standard Pareto distribution. The approximation error comes from the fact that the empirical \emph{c.d.f}'s are used in $\widehat T$ instead of the genuine marginal \emph{c.d.f.}'s $F_j$. After this standardization step, the selected extreme samples are  $\{ \widehat V_i, \|\widehat V_i\|_\infty \geq  V_{(\lfloor n \gamma \rfloor)} \}$.

\subsection{Logistic distribution - An illustration of extremes dependence structure}
\label{logistic_Appendix}
 The logistic distribution with dependence  parameter $\delta  \in (0, 1]$  is defined in $\rset^p$ by its  \textit{c.d.f.}
$F(x) = \exp\big\{ - (\sum_{j=1}^p {x^{(j)}}^\frac{1}{\delta})^{\delta} \big\}$. It can be considered as a simplified counterpart of the asymmetric logistic.
Samples from both asymmetric logistic distribution and logstic distribution can be simulated according to algorithms proposed in  \citet{stephenson2003simulating}. \autoref{fig:Logistic_Exemples} illustrates the logistic with various values of $\delta$. As $\delta$ gets close to~$1$ extremes tend to occur in a non concomitant design, \emph{i.e.} the probability of a simultaneous excess of a high threshold by more than one vector component is negligible. Conversely, as of $\delta$ gets close to $0$, extreme values are more likely to occur simultaneously. 
\begin{figure}[h]{}
  \begin{subfigure}[t]{0.32\textwidth}
    \includegraphics[width=\textwidth]{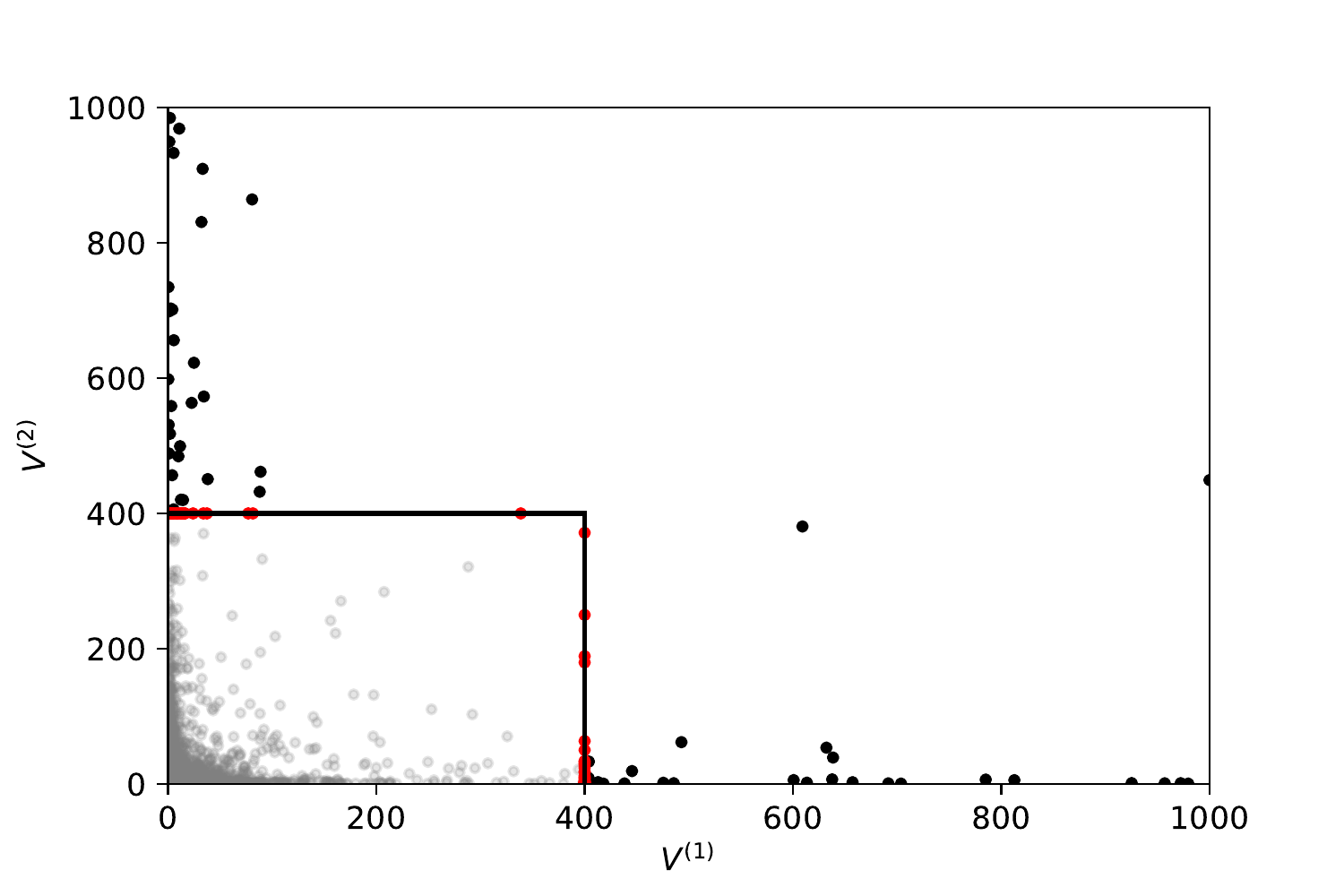}
    \caption{low tail dependence}
    \label{fig:logistic_alpha_09}
    \end{subfigure}
\begin{subfigure}[t]{0.32\textwidth}
    \includegraphics[width=\textwidth]{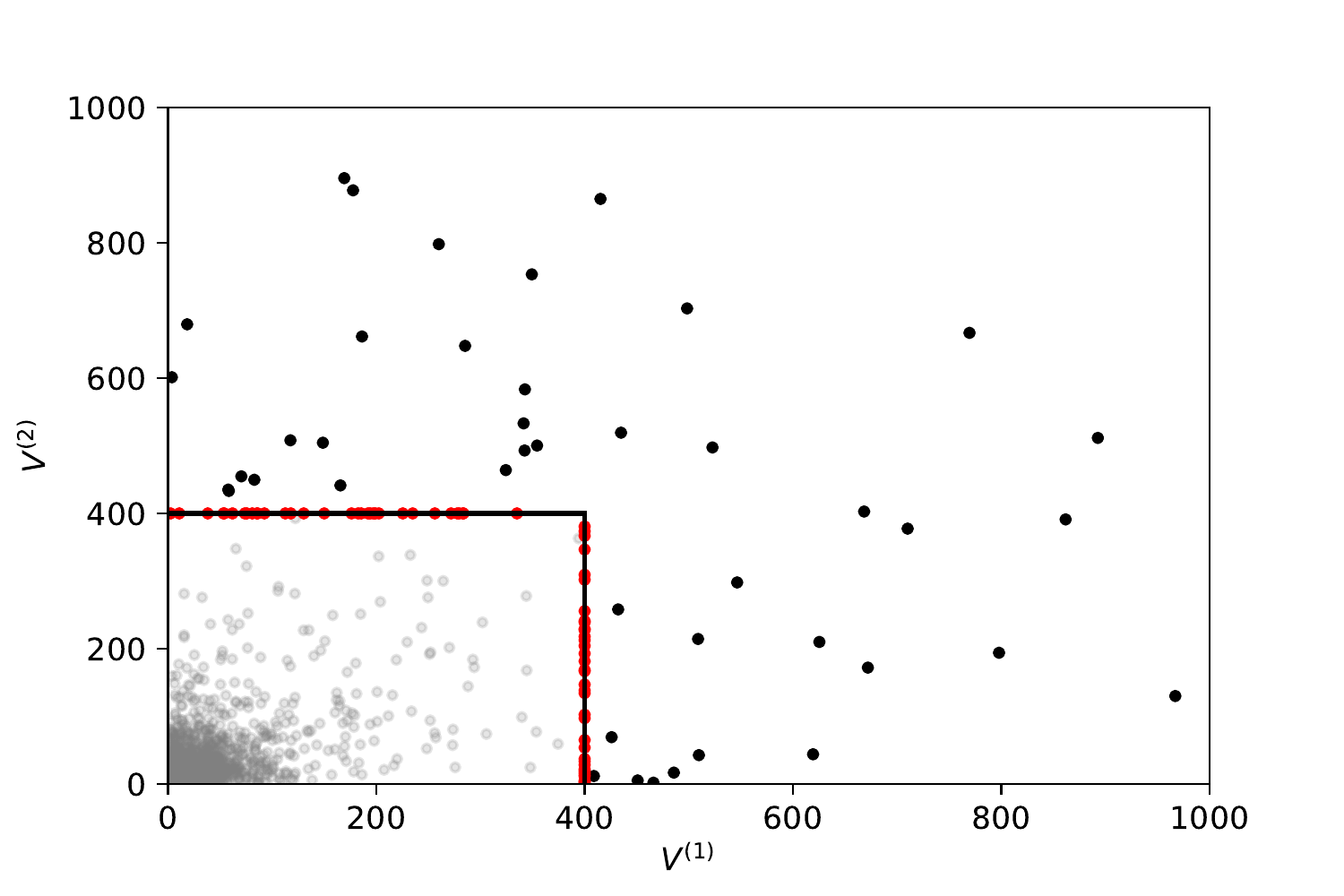}
    \caption{ moderate tail dependence}
    \label{fig:logistic_alpha_05}
        \end{subfigure}
\begin{subfigure}[t]{0.32\textwidth}
    \includegraphics[width=\textwidth]{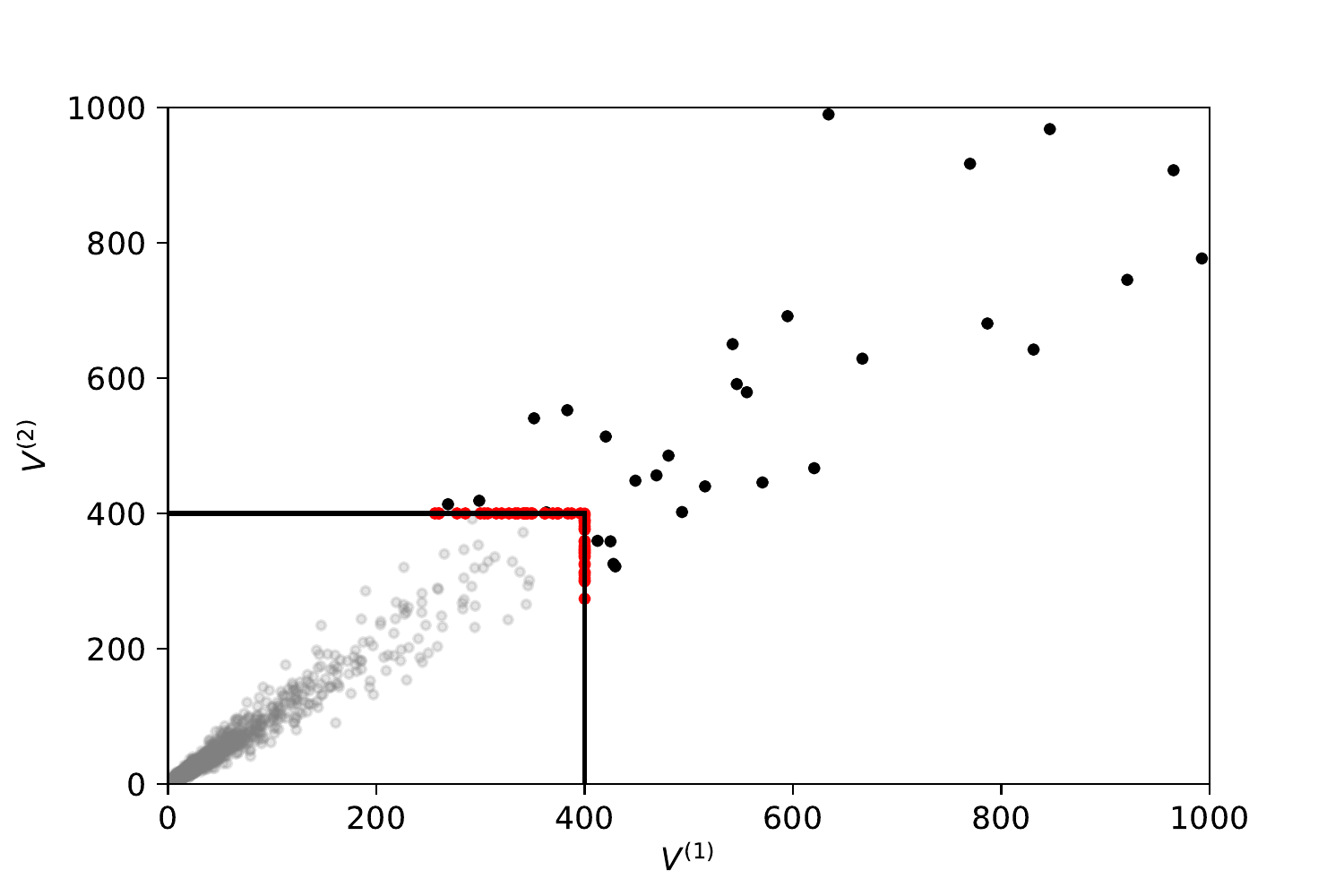}
    \caption{high tail dependence}
    \label{fig:logistic_alpha_01}
\end{subfigure}
\caption{Illustration of the distribution of the angle $\Theta(X)$ obtained with bivariate samples $X$ generated from a logistic model with different coefficients of dependence ranging from near asymptotic independence  \autoref{fig:logistic_alpha_09} ($\delta = 0.9$) to high asymptotic dependence \autoref{fig:logistic_alpha_01} ($\delta = 0.1$) including moderate dependence \autoref{fig:logistic_alpha_05} ($\delta = 0.5$). Non extreme samples are depicted in gray, extreme samples are represented with black dots and the angles  $\Theta(X)$ (extreme samples projected on the $\ell_\infty$ sphere) are plotted in red. Note that not all extremes are shown since the plot was truncated for a better visualization. However all projections on the sphere are shown.}
\label{fig:Logistic_Exemples}
\end{figure}

\subsection{Structure Dependence} A common approach for modeling extreme events is to use some flexible parametric subclass of distributions \citep{stephenson2009high}. Let $F$ denote a multivariate extreme value \textit{c.d.f}. Each univariate marginal distribution of $F$ is then a generalized extreme value distribution. More precisely, for $j \in \llbracket 1,p \rrbracket$, the $j$-th univariate marginal \textit{c.d.f} is given by
\begin{align*}
F^{j}\left(x^{j}\right)=\exp \left[-\left(1+\xi_{j}\left(x_{j}-\mu_{j}\right) / \sigma_{j}\right)_{+}^{-1 / \xi_{j}}\right]
\end{align*}
where $\mu_{j}, \xi_{j}$ and $\sigma_{j}>0$ are respectively the location, shape and scale parameters for the $j$-th marginal distribution.

The asymmetric logistic model provides perhaps the most popular parametric subclass of multivariate extreme value distributions. It is defined as follows.
\begin{definition}\label{def:asym}(Asymmetric logistic, \citep{Tawn90})
Define $y_{j}=-1 / \log F^{j}\left(x^{j}\right)$ for $j \in \llbracket 1,p \rrbracket$. The $p$-dimensional asymmetric logistic \textit{c.d.f} is
\begin{align*}
F\left(x\right)=\exp \left[-\sum_{K \subset \llbracket 1,p \rrbracket}\left\{\sum_{j \in K}\left(\beta_{j,K} y_{j}\right)^{1/\alpha_{K}}\right\}^{\alpha_{K}}\right],
\end{align*}
where $\alpha_{K}$ are the dependence parameters and $\beta_{j, K}$ are the asymmetry parameters.
\end{definition}

\section{Self-supervised Learning} \label{sec:self_sup}

The task tackled in this paper revolves around understanding the dependence
structure of extremes. At first glance, one may sum up our goal as a \textit{self-supervised learning} problem where
the objective would be to predict the $\ell_1$ norm
of an extreme sample relying on subgroups of features best contributing to these features being extreme. Following \citet{kolesnikov2019revisiting}, given an extreme input $x$ one could build a \textit{preformulated} label to be predicted as
$||x||_1$, a regular linear regression model could be trained on the resulting trivial
labeled dataset $\{(X_{(i)}, Y_{(i)})\}_{i=1}^k$ with $Y_{(i)} = ||X_{(i)}||_1$ and analyse the parameters of the linear regression model. Although, to the best of our
knowledge there is no linear regression models which directly deals with non-predefined groups of features
best contributing to the $\ell_1$ norm. For the sake of completeness we detail below the linear models involving such group analysis:
\begin{itemize}
  \item \textbf{Group Lasso} \cite{yuan2006model}\textbf{.}\ The fomulation of our problem of interest as a group lasso supposes that the $p$ predictors are divided into $L$ groups, where $p_l$ represents the number of features in group $l$. The self-supervised learning problem rewrites as
  \begin{align*}
    \min_{\beta_l \in \rset^p} \left\| \|X \un\|_1 - \sum_{l=1}^L X_{l}\beta_l \right\|_2^2 + \lambda \sum_{l=1}^L \sqrt{p_l}\|\beta_l\|_2,
  \end{align*}
where $\un = (1, \ldots, 1)^T$ denotes the column vector of size $k$ solely containing $1$ and $\lambda \in \rset$. However, the group of features $L$ in the solution are predefined before solving the optimization problem. In that aspect,  our framework differs from Group Lasso as we seek to find the groups of features.

  \item \textbf{Sparse Group Lasso} \cite{friedman2010note}\textbf{.}\ Sparse group lasso consists of a linear combination of group lasso and a lasso penalization that provide solutions that are both \textit{between} and \textit{within} group sparse. The minimization problem is the following
  \begin{align}
    \min_{\beta_l \in \rset^p} \left\| \|X \un\|_1 - \sum_{l=1}^L X_{l}\beta_l \right\|_2^2 + \tau \lambda \sum_{l=1}^L |\beta_l| + (1 - \tau)\lambda \sum_{l=1}^L \sqrt{p_l}\|\beta_l\|_2,
    \label{eq:sgl}
  \end{align}
where $\tau \in [0,1]$ balances the relative importance of sparsity term lasso or the group in the optimization problem. Although, in that setting the different groups of features still remain to be set in advance.
  \item \textbf{Adaptive Group Lasso}   \cite{wang2008note}\textbf{ \& \textbf{Adaptive Sparse Group Lasso}} \cite{mendez2020adaptive}\textbf{.} \
  Adaptive Group Lasso and Adaptive Sparse Group Lasso consist in setting a penalizations parameter $\lambda_l$ fot each features group $l\leq L$ in Eq. \ref{eq:sgl}. Once again, the $L$ groups of features must be predefined and do no fit our setting as these groups are unknown and considering the total number of combination ($2^p - 1$) could be computationally limiting as $p$ gets large.

\end{itemize}

\section{Numerical Experiments Details} \label{sec:details}

\subsection{Model Selection}\, DAMEX and Isolation Forest hyperparameters are the same as in \citet{goix2016sparse, goix2017sparse}. Note that the performance of DAMEX and Isolation Forest in Table \ref{tab6} differ from results from Table $4$ in \citet{goix2016sparse} since they report performance combining both the extreme and non-extreme regions: they rely on DAMEX for samples falling in the extreme regions and Isolation Forest on the non-extreme regions as they depict in their Figure 5. MEXICO parameters are set according to Remarks \ref{rem:sel_k} and \ref{rem:sel_tau}. Note that all performance reported in Tables \ref{tab6} are solely computed on test samples considered as extreme since we focus on extreme regions.

\subsection{Additional Results Feature Clustering}

We present the full results of the performance of MEXICO regarding the feature clustering task. The projection step is either performed using alternating projections based on the method POCS (Projection Onto Convex Sets) or with the more elaborate technique Dykstra.

\begin{table}[h]
\centering
\resizebox{\textwidth}{!}{
\begin{tabular}{|c|ccc|ccc|ccc|}
\hline
$p$ & \multicolumn{3}{c|}{Spectral Clustering \cite{ding2005equivalence}} & \multicolumn{3}{c|}{Spherical-Kmeans \cite{janssen2020k}} & \multicolumn{3}{c|}{MEXICO (POCS)} \\
\hline
& H & C & v-M & H & C & v-M & H & C & v-M \\
\hline
75 & 0.925$\pm$0.054 & 0.937$\pm$0.040 & 0.931$\pm$0.046 & 0.950$\pm$0.034 & 0.972$\pm$0.024 & 0.961$\pm$0.027 & \textbf{0.978}$\pm$0.025 & 0.976$\pm$0.024 & \textbf{0.977}$\pm$0.024 \\
\hline
100 & 0.918$\pm$0.058 & 0.934$\pm$0.039 & 0.926$\pm$0.048 & 0.943$\pm$0.031 & 0.967$\pm$0.024 & 0.955$\pm$0.026 & \textbf{0.976}$\pm$0.020 & \textbf{0.979}$\pm$0.021 & \textbf{0.976}$\pm$0.020 \\
\hline
150 & 0.889$\pm$0.060 & 0.925$\pm$0.031 & 0.906$\pm$0.045 & 0.940$\pm$ 0.026 & 0.962$\pm$0.020 & 0.951$\pm$0.022 & \textbf{0.973}$\pm$0.015 & \textbf{0.977}$\pm$0.013 & \textbf{0.975}$\pm$0.014\\
\hline
200 & 0.886$\pm$0.047 & 0.928$\pm$0.024 & 0.906$\pm$0.034 & 0.940$\pm$0.018 & 0.962$\pm$0.014 & 0.951$\pm$0.015 & \textbf{0.970}$\pm$0.015 & \textbf{0.975}$\pm$0.012 & \textbf{0.972}$\pm$0.013 \\
\hline
\end{tabular}}
\caption{Comparison of Homogeneity (H), Completeness (C) and v-Measure (v-M) from prediction scores for Sppectral Clusterings Spherical-Kmeans and Mexico with alternating projections on simulated data with different dimension $p$.}
\label{tab:tab4}
\end{table}

\begin{table}[h]
\centering
\resizebox{\textwidth}{!}{
\begin{tabular}{|c|ccc|ccc|ccc|}
\hline
$p$ & \multicolumn{3}{c|}{Spectral Clustering \cite{ding2005equivalence}} & \multicolumn{3}{c|}{Spherical-Kmeans \cite{janssen2020k}} & \multicolumn{3}{c|}{MEXICO (Dykstra)} \\
\hline
& H & C & v-M & H & C & v-M & H & C & v-M \\
\hline
75 & 0.925$\pm$0.054 & 0.937$\pm$0.040 & 0.931$\pm$0.046 & 0.950$\pm$0.034 & 0.972$\pm$0.024 & 0.961$\pm$0.027 & \textbf{0.977}$\pm$0.025 & 0.975$\pm$0.024 & \textbf{0.976}$\pm$0.024 \\
\hline
100 & 0.918$\pm$0.058 & 0.934$\pm$0.039 & 0.926$\pm$0.048 & 0.943$\pm$0.031 & 0.967$\pm$0.024 & 0.955$\pm$0.026 & \textbf{0.978}$\pm$0.020 & \textbf{0.979}$\pm$0.021 & \textbf{0.978}$\pm$0.020 \\
\hline
150 & 0.889$\pm$0.060 & 0.925$\pm$0.031 & 0.906$\pm$0.045 & 0.940$\pm$ 0.026 & 0.962$\pm$0.020 & 0.951$\pm$0.022 & \textbf{0.976}$\pm$0.015 & \textbf{0.980}$\pm$0.013 & \textbf{0.978}$\pm$0.014\\
\hline
200 & 0.886$\pm$0.047 & 0.928$\pm$0.024 & 0.906$\pm$0.034 &  0.940$\pm$0.018 & 0.962$\pm$0.014 & 0.951$\pm$0.015 & \textbf{0.967}$\pm$0.015 & \textbf{0.972}$\pm$0.012 & \textbf{0.970}$\pm$0.013 \\
\hline
\end{tabular}}
\caption{Comparison of Homogeneity (H), Completeness (C) and v-Measure (v-M) from prediction scores for Spherical-Kmeans and Mexico with Dykstra projection on simulated data with different dimension $p$.}
\label{tab:tab5}
\end{table}

\subsection{Additional Results on Anomaly detection}
\textbf{Real world data preprocessing.}\,
We present the details about the preprocessing of the real world datasets.

\begin{table}[h]
\centering
\begin{tabular}{|c|r|r|c|c|c|}
\hline
Dataset & \multicolumn{1}{c|}{Size} & \multicolumn{1}{c|}{Anomalies} & $p$  & $\tau$ & $\lambda$ \\ 
\hline
SF & 73 237 & 3298 (4.5\%) & 4 & 0.8 & 10 \\
\hline
SA & 100 655 & 3377 (3.4\%) & 41 & 0.7 & 5 \\
\hline
http & 58 725 & 2209 (3.8\%) & 3 & 0.5 & 10 \\
\hline
shuttle & 49 097 & 3511 (7.2\%) & 9 & 0.7 & 5 \\
\hline
forestcover & 286 048 & 2747 (0.9\%) & 54 & 0.7 & 5 \\
\hline
\end{tabular}
\caption{Datasets Description and Parameter Configuration}
\label{tab_bonus}
\end{table}

The shuttle dataset is the fusion of the training and testing datasets available in the UCI repository \cite{Lichman2013}. The data have 9 numerical attributes, the first one being time. Labels from 7 different classes are also available. Class 1 instances are considered as normal, the others as anomalies. We use instances from all different classes but class 4, which yields an anomaly ratio (class 1) of 7.2\%.

In the forestcover data, also available at UCI repository \cite{Lichman2013}, the normal data are the instances from class 2 while instances from class 4 are anomalies, other classes are omitted, so that the anomaly ratio for this dataset is 0.9\%.

The last three datasets belong to the KDD Cup 99 dataset \cite{KDD99, Tavallaee2009}, produced by processing the tcpdump portions of the 1998 DARPA Intrusion Detection System (IDS) Evaluation dataset, created by MIT Lincoln Lab \cite{Lippmann2000}. The artificial data was generated using a closed network and a wide variety of hand-injected attacks (anomalies) to produce a large number of different types of attack with normal activity in the background. Since the original demonstrative purpose of the dataset concerns supervised AD, the anomaly rate is very high (80\%), which is unrealistic in practice, and inappropriate for evaluating the performance on realistic data. We thus take standard preprocessing steps in order to work with smaller anomaly rates.

For datasets SF and http we proceed as described in \cite{yamanishi2004line}: SF is obtained by picking up the data with positive logged-in attribute, and focusing on the intrusion attack, which gives an anomaly proportion of 4.5\%. The dataset http is a subset of SF corresponding to a third feature equal to ’http’. Finally, the SA dataset is obtained as in \cite{Eskin2002geometric} by selecting all the normal data, together with a small proportion (3.4\%) of anomalies.

\textbf{Further Experimental details on Anomaly Detection.}\,We present the full results of the performance of MEXICO regarding the anomaly detection task. The projection step is either performed using alternating projections based on the method POCS (Projection Onto Convex Sets) or with the more elaborate technique Dykstra.

\begin{table}[H]
\centering
\begin{tabular}{|c|c|c|c|c|}
\hline
Dataset & \shortstack{iForest \\ \cite{liu2008isolation}} & \shortstack{DAMEX \\ \cite{goix2016sparse}} & \shortstack{MEXICO \\ (POCS)} & \shortstack{MEXICO \\ (Dykstra)} \\
\hline
SF & 0.381$\pm$0.086 & 0.710$\pm$0.031 & \textbf{0.892}$\pm$0.013 &  0.710$\pm$0.030 \\
\hline
SA & 0.886$\pm$0.032 & 0.982$\pm$0.002 & 0.981$\pm$0.006 & 0.983$\pm$0.031 \\
\hline
http & 0.656$\pm$0.094 & 0.996$\pm$0.002 & 0.995$\pm$0.005 & 0.997$\pm$0.002 \\
\hline
shuttle & 0.970$\pm$0.020  & 0.990$\pm$0.003 & 0.990$\pm$0.003 & 0.989$\pm$0.003 \\
\hline
forestcover & 0.654$\pm$0.096 & 0.762$\pm$0.008 & \textbf{0.863}$\pm$0.015 &  0.851$\pm$0.008\\
\hline
\end{tabular}
\caption{Comparison of Area Under Curve of Receiver Operating Characteristic (ROC-AUC)from prediction scores of each method on different anomaly detection datasets.}

\label{tab6}
\end{table}

\begin{table}[H]
\centering
\begin{tabular}{|c|c|c|c|c|}
\hline
Dataset & \shortstack{iForest \\ \cite{liu2008isolation}} & \shortstack{DAMEX  \\ \cite{goix2016sparse}} & \shortstack{MEXICO \\ (POCS)} & \shortstack{MEXICO \\ (Dykstra)} \\
\hline
SF & 0.393$\pm$0.081 & 0.650$\pm$0.034 & \textbf{0.812}$\pm$0.016 & 0.661$\pm$0.031\\
\hline
SA & 0.879$\pm$0.031 &  0.938$\pm$0.012  & 0.940$\pm$0.031 & 0.950$\pm$0.011 \\
\hline
http & 0.658$\pm$0.099 & 0.968$\pm$0.009 & 0.972$\pm$0.012 & 0.971$\pm$0.008 \\
\hline
shuttle & 0.826$\pm$0.055 & 0.864$\pm$0.026 & 0.864$\pm$0.037 & 0.818$\pm$0.024 \\
\hline
forestcover & 0.894$\pm$0.037 & 0.893$\pm$0.010 &  0.958$\pm$0.006 & 0.954$\pm$0.004 \\
\hline
\end{tabular}
\caption{Comparison of Average Precision (AP) from prediction scores of each method on different anomaly detection datasets.}

\label{tab7}
\end{table}

\section{Further Numerical Experiments}\label{sec:Appendix_exp}
\subsection{MEXICO - Further Experimental Results}
\citet{cutler1994archetypal} provide an archetypal analysis of the Swiss Army dataset. This dataset consists of $6$ head dimensions from $200$ Swiss soldiers. The data was gathered to construct face masks for the Swiss army. Few samples of the dataset are presented in Table \ref{tab_swiss}.

The first measurement (MFB) corresponds to the width of the face just above the eyes. The second feature (BAM) corresponds to the width of the face just below the mouth. The third measurement (TFH) is the distance from the top of the nose to the chin. The fourth feature (LGAN) is the length of the nose. The fifth measurement (LTN) is the distance from the ear to the top of the head while the sixth  (LTG) is the distance from the ear to the bottom of the face. For a better visualization of the dataset, we made simple drawings of the different samples. Figure \ref{fig:face_features} illustrates the $6$ measurements.

\begin{table}[h!]
\centering
\begin{tabular}{|c|c|c|c|c|c|c|}
\hline
id & MFB & BAM & TFH & LGAN & LTN & LTG \\
\hline
0 & 113.2 & 111.7 & 119.6 & 53.9 & 127.4 & 143.6 \\
\hline
1 & 117.6 & 117.3 & 121.2 & 47.7 & 124.7 & 143.9 \\
\hline
2 & 112.3 & 124.7 & 131.6 & 56.7 & 123.4 & 149.3 \\
\hline
3 & 116.2 & 110.5 & 114.2 & 57.9 & 121.6 & 140.9 \\
\hline
4 & 112.9 & 111.3 & 114.3 & 51.5 & 119.9 & 133.5 \\
\hline
\end{tabular}
\caption{Extract of the Swiss Army dataset.}
\label{tab_swiss}
\end{table}
A question that naturally rises is to figure out subgroups of face features that get large simultaneously. MEXICO algorithm performed on the standardized dataset provides the following groups of features : $\{5,6\}$(green), $\{1, 3\}$(blue) and $\{2, 4\}$(red), as illustrated in Figure~\ref{fig:face_colored}. These results step in the direction of interpretability of feature clusters that may be large concomitantly in a real world dataset.
\begin{figure}[H]
     \centering
     \begin{subfigure}[b]{0.3\textwidth}
         \centering
         \includegraphics[width=\textwidth]{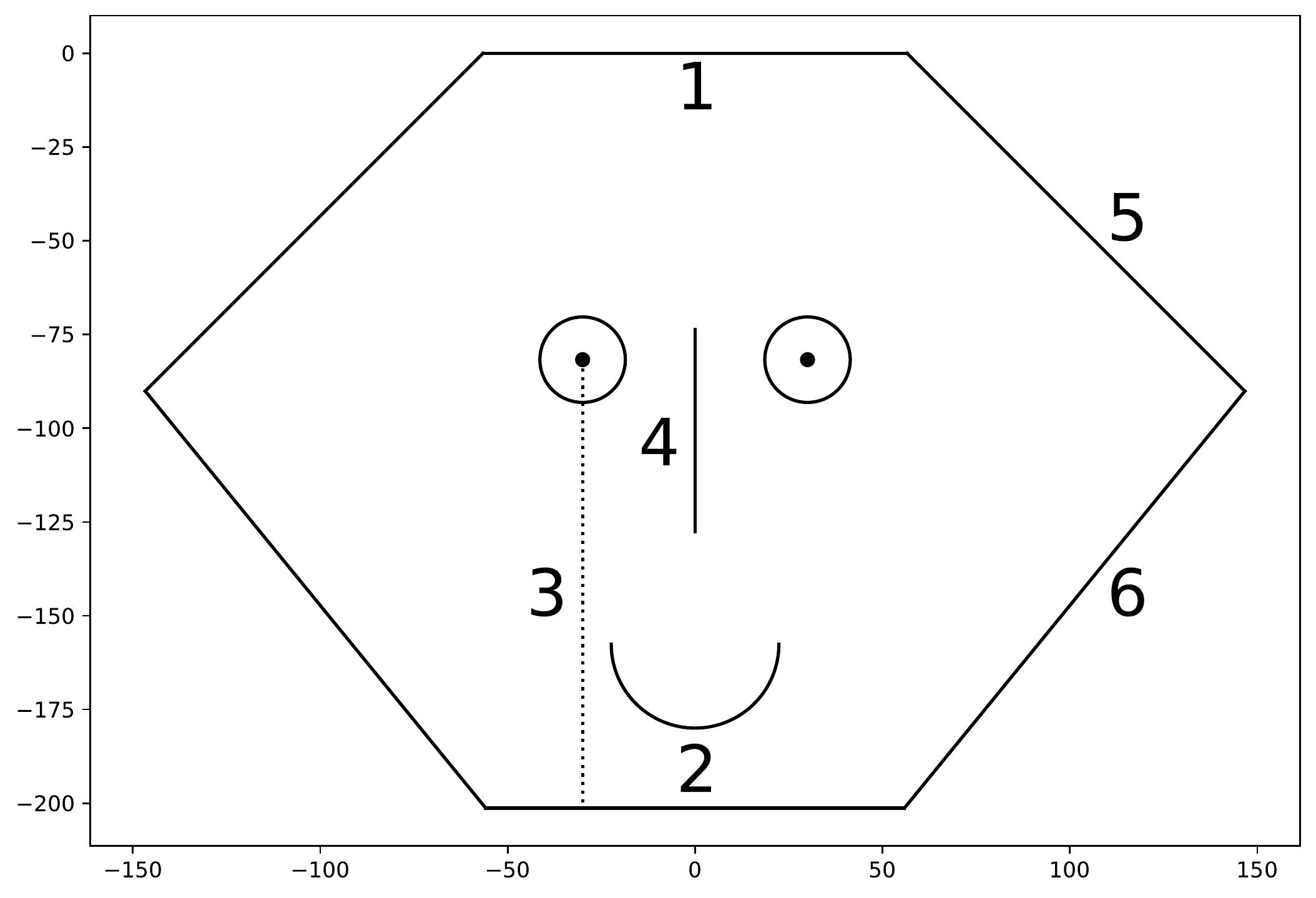}
         \caption{}
         \label{fig:face_features}
     \end{subfigure}
     \begin{subfigure}[b]{0.3\textwidth}
         \centering
         \includegraphics[width=\textwidth]{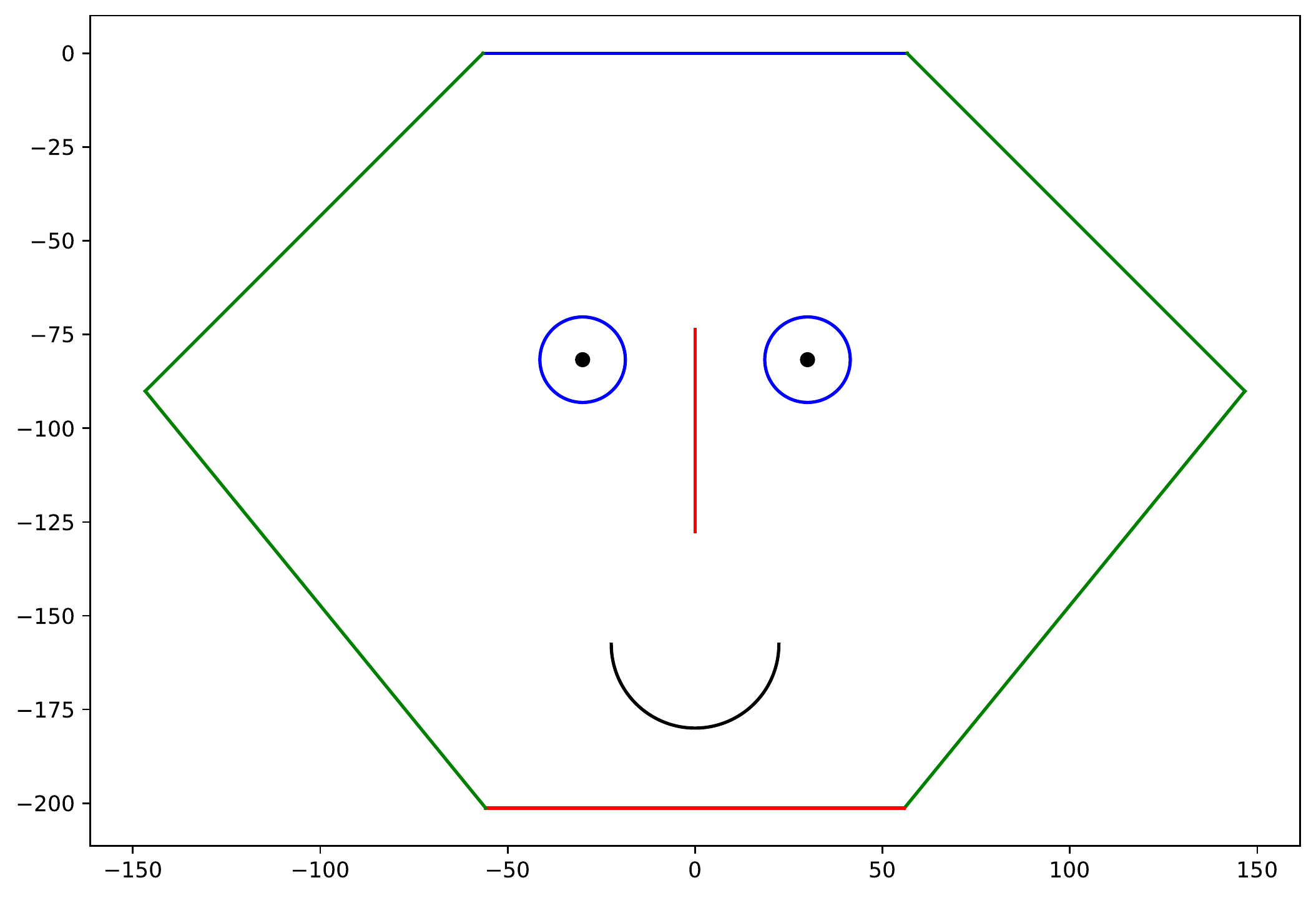}
         \caption{}
         \label{fig:face_colored}
     \end{subfigure}
     \caption{Illustration of the $6$ measurements (a) and subgroups that tend to be large simultaneously (b).}
\end{figure}

\subsection{Classification in Extremes after Dimension Reduction.}
\label{app:classification}
In this section, we compare the influence on a downstream classification task of dimension reduction resulting from MEXICO and two other methods applied to extreme samples, namely PCA and Sparse PCA. The considered task is binary classification in extremes. Following the experiments from \citet{jalalzai2018binary}, we generate $5\cdot10^3$ i.i.d samples in $\rset_+^4$ following a logistic distribution with dependence parameter $\delta=0.2 $. These points are labeled $+1$. Similarly we  generate $5\cdot 10^3$ i.i.d samples in $\rset_+^4$ following a logistic distribution with parameter $\delta =0.4 $. These points are labeled $-1$. The samples are projected in lower-dimensional subspace according to methods summarized in Table~\ref{tab:dim_reduction}. Random Forest is the considered class of classifiers. The number of trees is set to $10$. The norm is the $\ell_\infty$ norm. The value of $k$ is set to $100$. In Figure \ref{fig:dimRed_RF}, boxplots obtained over $100$ runs show the  test error rate  in extremes for a Random Forest classifier after any dimension reduction method from Table ~\ref{tab:dim_reduction}. 
\begin{table}[h!]
\centering
\begin{tabular}{|c|c|c|c|c|}
\hline
Method & Initial dimension &  Resulting dimension & Sparsity & Extreme Dependence Structure\\
\hline
\hline
PCA & $p$ & $m$ &  &\\
\hline
Sparse PCA & $p$ & $m$ & \checkmark & \\
\hline
MEXICO & $p$ & $m$ & \checkmark & \checkmark \\
\hline
\end{tabular}
\caption{Summary of Dimension Reduction Methods in Extremes.}
\label{tab:dim_reduction}
\end{table}

\begin{figure}[H]
    \centering
    \includegraphics[width=0.4\textwidth]{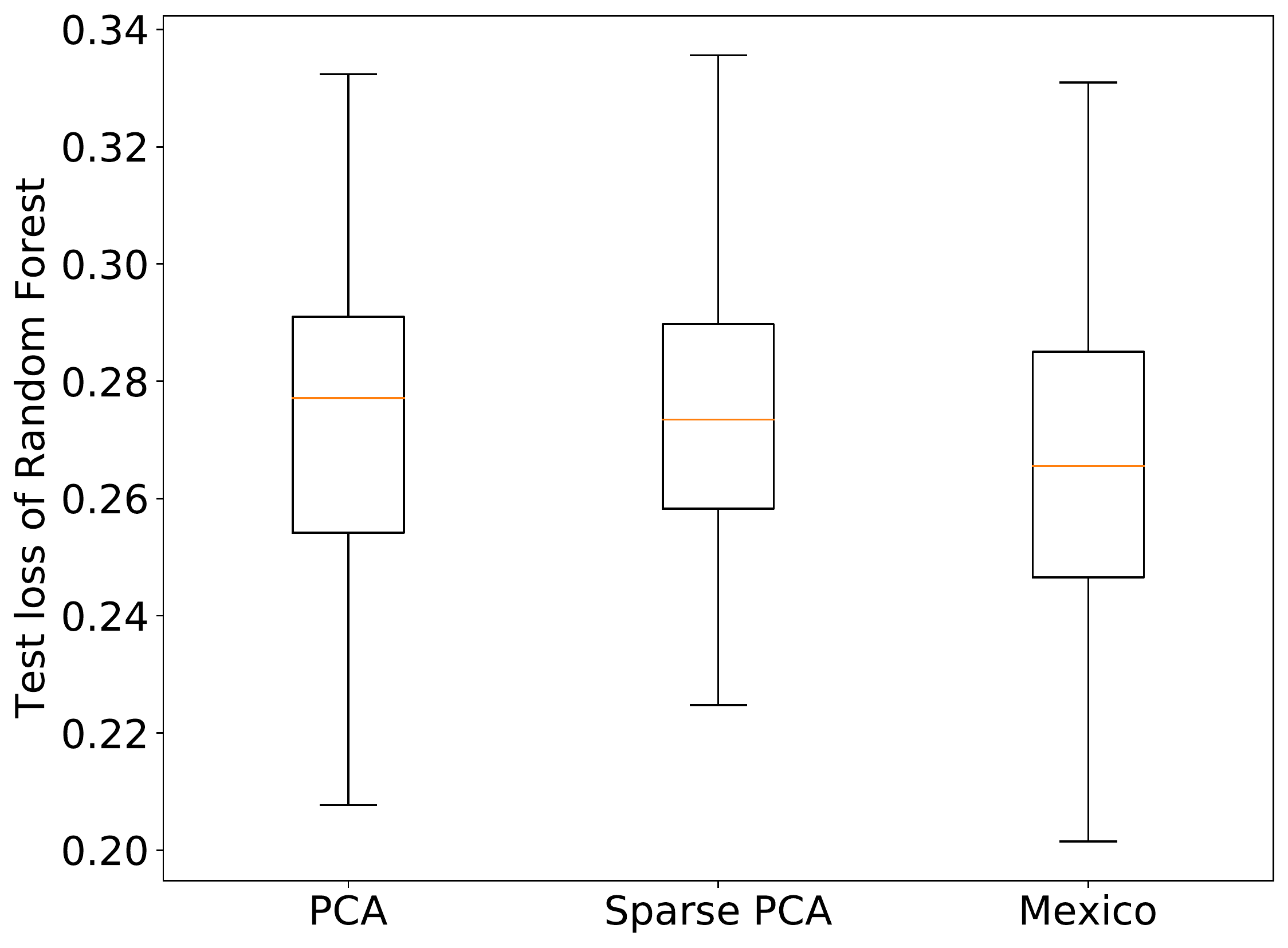}
    \caption{Classification Error in Extremes after Dimension Reduction.}
    \label{fig:dimRed_RF}
\end{figure}

\subsection{Contraction mapping}
\label{app:contractionmapping}
Figure \ref{fig:contraction} depicts the contraction mapping property induced by MEXICO in the following example: Logistic data (see Section \ref{logistic_Appendix} above) is generated in $\rset^4$ with two different dependence structures $K_1 = \{1,2\}$ and $K_2= \{3,4\}$ thus the total number of clusters $m$ is equal to $2$. The number of generated samples is $n=10^6$ and $k$ is set to $\sqrt{n}$. The violin plot on the left represents the distributional distance between $2$ standardized random extreme samples after normalization $\theta_i = X_i/\|X_i\|_\infty, \theta_2 = X_2/\|X_2\|_\infty$
while the violin plot on the right reports the distances between $2$ transformed and normalized data $\widetilde\theta_i = \widetilde X_i/\|\widetilde X_i\|_\infty, \widetilde \theta_2 = \widetilde X_2/\|\widetilde X_2\|_\infty$. All distances are computed on normalized data for the sake of visualization. The lower (\textit{resp. upper}) part of the violin plot depicts the intra-cluster (\textit{resp. intra-cluster})
 $\ell_2$ distance.\\
 First, We focus on the lower part of the violin plots: the distribution of intra-cluster distance is twice as big with the original data when compared to the transformed data. Second, it is worth noting that the inter cluster distances (\textit{i.e.} upper part of the violin plot) tends to be smaller after transformation when compared with the original data.
\begin{figure}[H]
    \centering
    \includegraphics[width=0.35\textwidth]{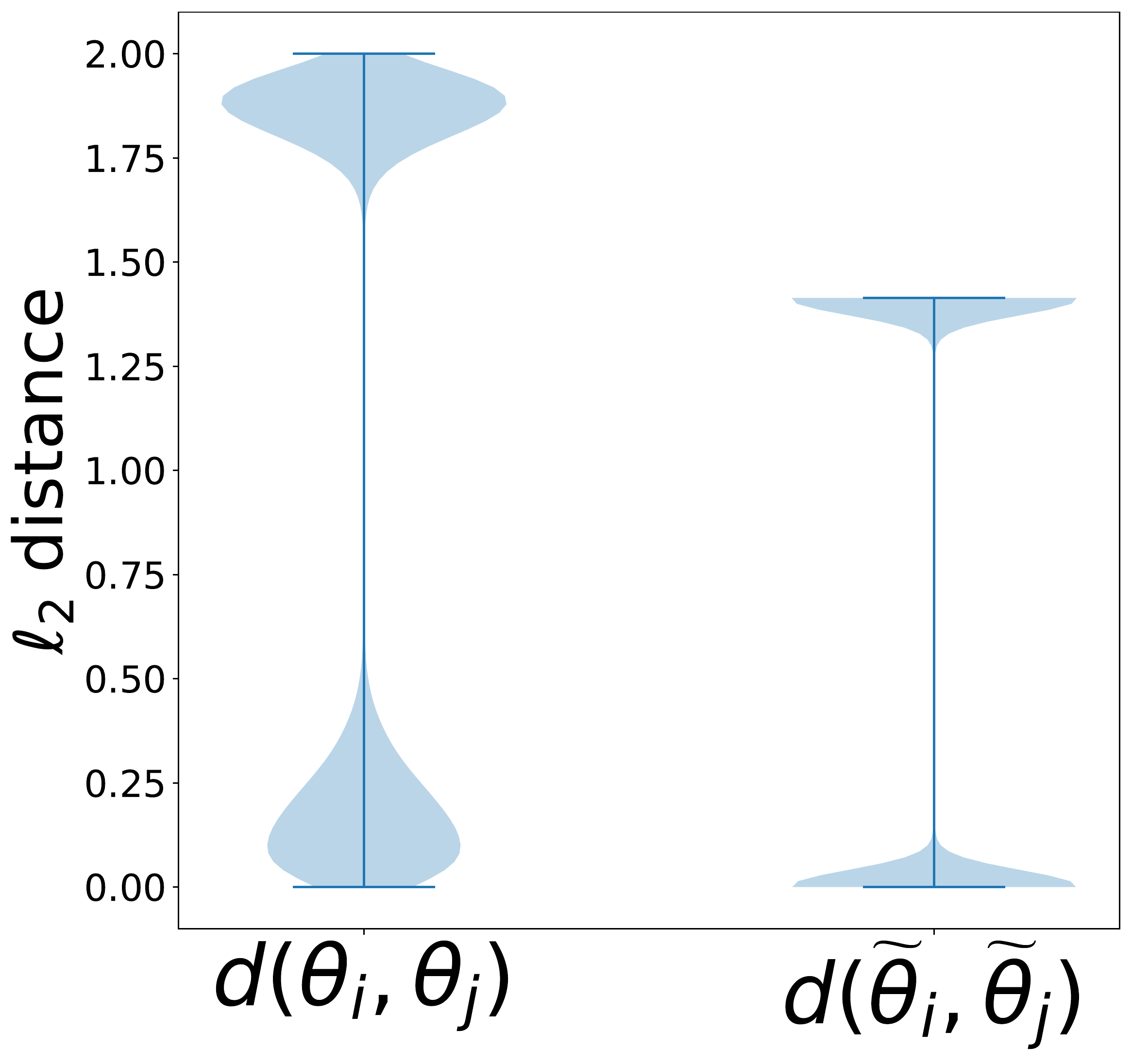}
    \caption{(Left)Violin plots of $l_2$ pairwise distances between normalized extreme samples $\theta_i, \theta_j$ and (Right) corresponding pairwise distances for transformed samples $\widetilde\theta_i, \widetilde\theta_j$.}
    \label{fig:contraction}
\end{figure}

\subsection{Angular MEXICO}
\label{app:angular_Mexico}
Theorem \ref{th3} provides a bound of the excess risk of MEXICO algorithm when working with normalized data. As suggested by the correspondance between the angular and exponent measures (see Definitions \ref{sec:EVT}), the analysis of the dependence structure of extremes clusters of features with the \emph{normalized} data (thus relying on the angular measure $\Phi$) is equivalent to the analysis with the \emph{non-normalized} data (thus relying on the exponent measure $\mu$).  Algorithm \ref{alg:Mexico_angular} is the counterpart of MEXICO as detailed in Algorithm \ref{alg:Mexico} relying on the normalized extremes (\textit{i.e.} $\{\Theta(X_{(i)})\}_{i\leq k}$). It follows \citet{janssen2020k} and exploits the normalized data to assess the dependence structure of extremes.

Input data or normalized data provide similar output matrix $\mathbf{W}_{mex}$ as illustrated in Figure \ref{fig:matrix_mex} when dealing with the dataset detailed in Section \ref{app:contractionmapping}. The $\mathbf{W}_{mex}$ matrix associated to Algorithm \ref{alg:Mexico} or Algorithm \ref{alg:Mexico_angular} both recovers the feature dependence structure of extremes as the first column of both matrices is associated to the cluster $\{0,1\}$ while the second column of both matrices is associated to the cluster $\{2,3\}$. Note that working on angular data implies normalizing samples which adds complexity to solve our problem of interest.

\begin{algorithm}[!h]
\caption{Angular MEXICO}
\begin{algorithmic}[1]
\REQUIRE Training data $(X_1,\ldots,X_n), 0<m<p, \lambda>0$ and rank $k(=\lfloor n \gamma\rfloor)$.
\STATE Initialize $(\mathbf{W}_0,\mathbf{Z}_0) \in \mathcal{A}_p^m \times \mathcal{A}_m^n$.
\STATE Standardize the data $\widehat V_{(i)} = \widehat T(X_{(i)})$ (see Remark \ref{rmk:ParetoStandardization}).
\STATE Sort training data by decreasing order of magnitude $ \|\widehat V_{(1)}\|_\infty \geq \ldots \geq  \|\widehat V_{(n)}\|_\infty$.
\STATE Consider the set of $k$ extreme training data  $\widehat V_{(1)}, \ldots, \widehat V_{(k)}$.
\STATE Normalize the $k$ extreme data $ \theta_i = \widehat V_{(i)}/ \|\widehat V_{(i)}\|_\infty$.
\STATE Compute $(\mathbf{W}_{mex},\mathbf{Z}_{mex}) \in \argmax_{(\mathbf{W},\mathbf{Z})} f_{\lambda}(\mathbf{W},\mathbf{Z})$ using update rule  \eqref{eq:algorithm} on the normalized data.
\STATE Given a new input ${X}_{\text{new}}$ standardized as $\widehat V_{\text{new}}$ with $\|\widehat V_{\text{new}}\|_\infty \geq \|\widehat V_{(k)}\|_\infty$ and normalized as $\theta_{\text{new}}$\\ compute $\widetilde \theta_{\text{new}}=\theta_{\text{new}} \mathbf{W}_{mex}$.
\STATE Compute predicted cluster $\varphi_0 = \argmax_{1 \leq j \leq m}{\widetilde \theta _{\text{new}}^j}$.
\STATE (\textbf{FC}) Return cluster $\varphi_0$. \\ (\textbf{AD}) Return score $\ell(\widetilde \theta_{\text{new}},\mathbf{W}_{{mex}})$.
\end{algorithmic}
\label{alg:Mexico_angular}
\end{algorithm}

\begin{figure}[h]{}
  \centering
  \begin{subfigure}[t]{0.3\textwidth}
    \includegraphics[width=\textwidth]{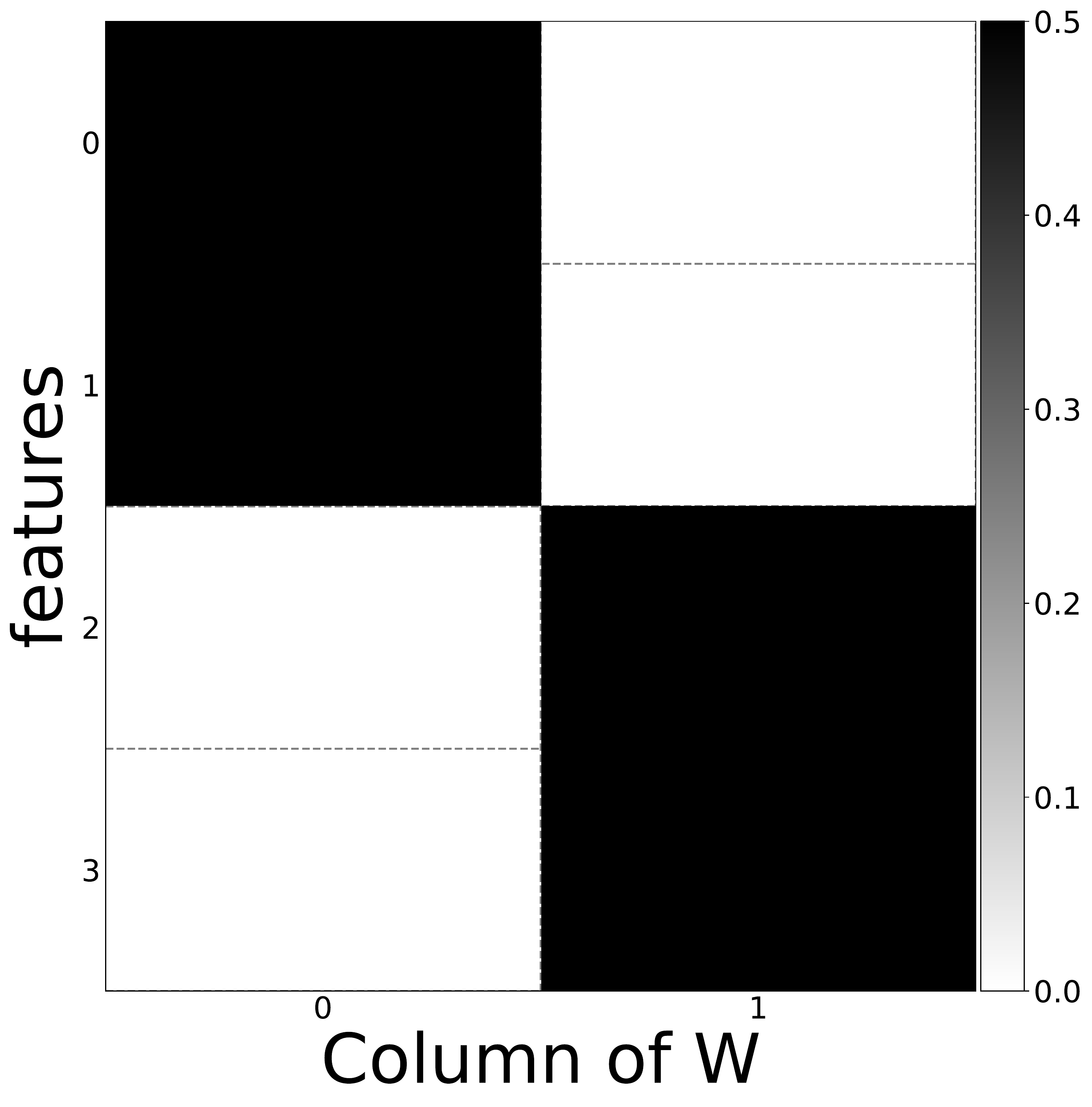}
    \caption{MEXICO - Algorithm~\ref{alg:Mexico}}
    \label{fig:mexmex}
    \end{subfigure}
    \qquad
\begin{subfigure}[t]{0.3\textwidth}
    \includegraphics[width=\textwidth]{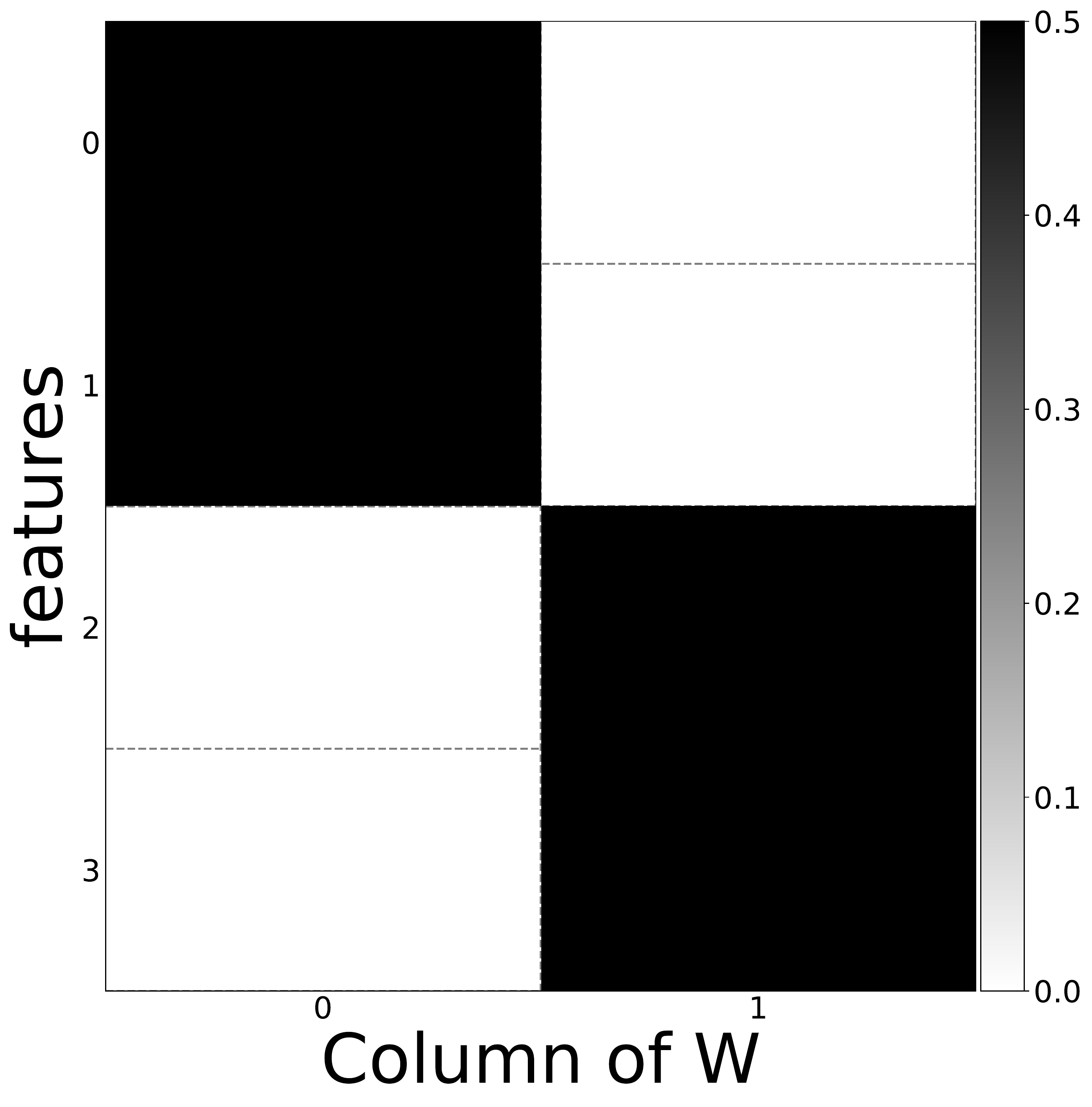}
    \caption{Angular MEXICO - Algorithm~\ref{alg:Mexico_angular}}
    \label{fig:mexangular}
\end{subfigure}
\caption{Matrix $\mathbf{W}_{mex}$ of Algorithm \ref{alg:Mexico} (\ref{fig:mexmex}) and  Algorithm \ref{alg:Mexico_angular} (\ref{fig:mexangular}).}
\label{fig:matrix_mex}
\end{figure}

\textbf{Conclusions on experimental findings.}  Hereafter, we summarize the empirical findings detailed above:\\
First, working with extremes or their normalized counterparts provide similar results to estimate the support of extremes with MEXICO algorithm as illustrated in Section \ref{app:angular_Mexico}. This empirical evidence is expected because of the \emph{one-to-one} correspondance between the angular measure and the exponent measure.
Second, while we make no claim that MEXICO outperforms PCA or Sparse PCA, it is worthy of attention that the dimension reduction induced by MEXICO remains competitive on a downstream classification task as detailed in Section \ref{app:classification}.
Finally, the transformation induced by MEXICO can be considered as a contraction mapping as illustrated in Section \ref{app:contractionmapping} for disjoint clusters. Future work will explore the analysis of MEXICO algorithm on overlapping clusters.


\end{document}